\newcommand{\cmark}{\ding{51}}%
\newcommand{\xmark}{\ding{55}}%
\def\1{\bm{1}}
\def\tL{{\mathsf{L}}}
\def\tP{{\mathsf{P}}}
\def\rvw{{\mathbf{w}}}
\def\rvz{{\mathbf{z}}}
\def\ervw{{\textnormal{w}}}
\def\gB{{\mathcal{B}}}
\def\gO{{\mathcal{O}}}
\newcommand{\R}{\mathbb{R}}
\DeclareMathOperator*{\argmin}{arg\,min}
\DeclareMathOperator{\sign}{sign}
\newcommand{\xiv}{\boldsymbol{\xi}}
\newcommand{\EE}{\mathbf{E}}
\newcommand{\rsf}{\mathsf{r}}
\newcommand{\ssf}{\mathsf{s}}
\newcommand{\ws}{\mathsf{w}}
\newcommand{\wv}{\rvw}
\newcommand{\zv}{\rvz}
\newcommand{\env}{\mathscr{M}}
\newcommand{\inner}[2]{\left\langle#1,#2\right\rangle}
\newcommand{\peta}{\mu}
\newcommand{\id}{\mathsf{id}}
\newcommand{\LP}[2][\varrho]{\tL^{#1}_{#2}}
\theoremstyle{definition}
\newtheorem{theorem}{Theorem}[section]
\newtheorem*{theorem*}{Theorem}
\newtheorem{lemma}[theorem]{Lemma}
\newtheorem{example}[theorem]{Example}
\newtheorem{prop}[theorem]{Proposition}
\pgfplotsset{cycle list/Set1}
\tikzset{state/.style={draw, rectangle, thick}}
\pgfplotsset{compat=1.13}
\newcommand{\fixed@sra}{$\vrule height 2\fontdimen22\textfont2 width 0pt\shortrightarrow$}
\newcommand{\shortarrow}[1]{%
  \mathrel{\text{\rotatebox[origin=c]{\numexpr#1*45}{\fixed@sra}}}
}
\newif\ifdraft
\declaretheoremstyle[%
  spaceabove=-6pt,%
  spacebelow=6pt,%
  headfont=\normalfont\itshape,%
  postheadspace=1em,%
  qed=\qedsymbol%
]{mystyle}
\title{Demystifying and Generalizing BinaryConnect}
\author{Tim Dockhorn \thanks{Work done during an internship at Huawei Noah's Ark Lab. Correspondence to \texttt{tim.dockhorn@uwaterloo.ca}.}\\ 
        University of Waterloo \\
        \And 
        Yaoliang Yu \\
        University of Waterloo \\
        \And 
        Eyyüb Sari \\
        Huawei Noah’s Ark Lab \\
        \And 
        Mahdi Zolnouri \\
        Huawei Noah’s Ark Lab \\
        \And 
        Vahid Partovi Nia \\
        Huawei Noah’s Ark Lab \\
}
\begin{document}

\maketitle

\begin{abstract}
    BinaryConnect (BC) and its many variations have become the de facto standard for neural network quantization. However, our understanding of the inner workings of BC is still quite limited. We attempt to close this gap in four different aspects: (a) we show that existing quantization algorithms, including post-training quantization, are surprisingly similar to each other; (b) we argue for proximal maps as a natural family of quantizers that is both easy to design and analyze; (c) we refine the observation that BC is a special case of dual averaging, which itself is a special case of the generalized conditional gradient algorithm; (d) consequently, we propose \emph{ProxConnect} (PC) as a generalization of BC and we prove its convergence properties by exploiting the established connections. We conduct experiments on CIFAR-10 and ImageNet, and verify that PC achieves competitive performance.
\end{abstract}
\section{Introduction} \label{sec:introduction}
Scaling up to extremely large datasets and models has been a main ingredient for the success of deep learning. 
Indeed, with the availability of big data, more computing power, convenient software, and a bag of training tricks as well as algorithmic innovations, 
the size of models that we routinely train in order to achieve state-of-the-art performance has exploded, e.g., to 
billions of parameters in recent language models~\citep{gpt3}. However, high memory usage and computational cost at inference time has made it difficult to deploy these models in real-time or on resource-limited devices~\citep{lin2020mcunet}. The environmental impact of training and deploying these large models has also been recognized~\citep{strubell2019energy}. A common approach to tackle these problems is to compress a large model through quantization, i.e., replacing high-precision parameters with lower-precision ones. For example, we may constrain a subset of the weights to be binary~\citep{courbariaux2015,rastegari2016,yin2018,bai2018,nia2018binary,Martinez2020Training,ajanthan2021} or ternary~\citep{li2016arxiv, zhu2016}. Quantization can drastically decrease the carbon footprint of training and inference of neural networks, however, it may come at the cost of increased bias~\citep{hooker2020characterising}.

One of the main methods to obtain quantized neural networks is to encourage quantized parameters during gradient training using explicit or implicit regularization techniques, however, other methods are possible~\citep{GuptaAGN15,han2016,ZhouYGXC17,ParkAY17,LengDLZJ18,HelwegenWGLCN19,han20d}. Besides the memory benefits, the structure of the quantization can speed up inference using, for example, faster matrix-vector products~\citep{han2016, hubara2017}. Training and inference can be made even more efficient by also quantizing the activations~\citep{rastegari2016} or gradients \citep{ZhouWNZWZ16}. Impressive performance has been achieved with quantized networks, for example, on object detection~\citep{yin2019} and natural language processing~\citep{xu2018} tasks. The theoretical underpinnings of quantized neural networks, such as when and why their performance remains reasonably  well, have been actively studied~\citep{anderson2018,DingLXS19,HouZK19,WangZZTWLYWL19,YinLZOQX19}.

BinaryConnect~\citep[BC,][]{courbariaux2015} and its many variations~\citep[][]{rastegari2016, zhu2016,ChenWP19} are considered the gold standard for neural network quantization. Compared to plain (stochastic) gradient descent, BC does not evaluate the gradient at the current iterate but rather at a (close-by) quantization point using the Straight Through Estimator~\citep{Bengio_STE_2013}. Despite its empirical success, BC has largely remained a ``training trick''~\citep{ajanthan2021} and a rigorous understanding of its inner workings has yet to be found, with some preliminary steps taken in \citet{LiDXSSG17} for the convex setting and in~\citet{yin2018} for a particular quantization set. As pointed out in~\citet{bai2018},  
BC only evaluates gradients at the finite set of quantization points, and therefore does not exploit the rich information carried in the continuous weights network.
\citet{bai2018} also observed that BC is formally equivalent to the dual averaging algorithm~\citep{nesterov2009, xiao2010}, while some similarity to the mirror descent algorithm was found in~\citet{ajanthan2021}.

The main goal of this work is to significantly improve our understanding of BC, by connecting it with well-established theory and algorithms. In doing so we not only simplify and improve existing results but also obtain novel generalizations. 
We summarize our main contributions in more details:
\begin{itemize}[leftmargin=*]
    \item In \Cref{sec:background}, we show that existing gradient-based quantization algorithms are surprisingly similar to each other: the only high-level difference is at what points we evaluate the gradient and perform the update. 
    \item In \Cref{sec:prox}, we present a  principled theory for constructing proximal quantizers. Our results unify previous efforts, remove tedious calculations, and bring theoretical convenience. We illustrate our theory by effortlessly designing a new quantizer that  
    can be computed in one-pass, works for different quantization sets (binary or multi-bit), and includes previous attempts as special cases~\citep{courbariaux2015,yin2018,bai2018,zhu2016}.
    \item In \Cref{sec:method}, we significantly extend the observation of \citet{bai2018} that the updates of BC are the same as the dual averaging algorithm~\citep{nesterov2009,xiao2010}: BC is a nonconvex counterpart of dual averaging, and more importantly, dual averaging itself is simply the generalized conditional gradient algorithm applied to a smoothened dual problem. The latter fact, even in the convex case, does not appear to be widely recognized to the best of our knowledge.
    \item In \Cref{sec:PC}, making use of the above established results, we propose~\emph{ProxConnect}~(PC) as a family of algorithms that generalizes BC and we prove its convergence properties for both the convex and the nonconvex setting. We rigorously justify the diverging parameter in proximal quantizers and resolve a discrepancy between theory and practice in the literature~\citep{bai2018,yin2018,ajanthan2021}.
    \item In \Cref{sec:experiments}, we verify that PC outperforms BC and ProxQuant~\citep{bai2018} on CIFAR-10 for both fine-tuning pretrained models as well as end-to-end training. On the more challenging ImageNet dataset, PC yields competitive performance despite of minimal
    hyperparameter tuning.
\end{itemize}
\section{Background} \label{sec:background}
We consider the usual (expected) objective $\ell(\rvw) = \EE\ell(\rvw, \xiv)$, where $\xiv$ represents random sampling. For instance, $\ell(\rvw) = \frac{1}{n} \sum_{i=1}^n \ell_i(\rvw)$ where $\xiv$ is uniform over $n$ training samples (or minibatches thereof), $\wv$ are the weights of a neural network and $\ell_i$ may be the cross-entropy loss (of the $i$-th training sample). We denote a sample (sub)gradient of $\ell$ at $\rvw$ and $\xiv$ as $\widetilde \nabla \ell(\rvw) = \nabla \ell(\rvw, \xiv)$ so that $\EE \widetilde\nabla \ell(\wv) = \nabla \ell(\wv)$. Throughout, we use the starred notation $\wv^*$ for continuous weights and reserve~$\wv$ for (semi)discrete ones.

We are interested in solving the following (nonconvex) problem: 
\begin{align}
\label{eq:prob}
\min_{\rvw \in Q} ~ \ell(\rvw), 
\end{align}
where $Q \subseteq \R^d$ is a discrete, nonconvex quantization set. For instance, on certain low-resource devices it may be useful or even mandatory to employ binary weights, i.e., $Q = \{\pm 1\}^d$. Importantly, our goal is to compete against the non-quantized, continuous weights network (i.e. $Q=\R^d$). In other words, we do \emph{not} necessarily aim to solve (the hard, combinatorial) problem \eqref{eq:prob} globally and optimally. Instead, we want to find discrete weights $\wv\in Q$ that remain satisfactory when compared to the non-quantized continuous weights. This is how we circumvent the difficulty in \eqref{eq:prob} and more importantly how the structure of $\ell$ could come into aid. If $\ell$ is reasonably smooth, a close-by quantized weight of a locally, or even globally, optimal continuous weight will likely yield similar performance. 

Tremendous progress has been made on quantizing and compressing neural networks. While it is not possible to discuss all related work, below we recall a few families of gradient-based quantization algorithms that directly motivate our work; more methods can be found in recent surveys~\citep{qin2020binary, guo2018survey}.

\textbf{BinaryConnect (BC).} \citet{courbariaux2015} considered binary networks where $Q = \{\pm1\}^d$ and proposed the BinaryConnect algorithm:
\begin{align}
\wv_{t+1}^* &= \wv_t^* - \eta_t \widetilde\nabla \ell(\tP(\wv_t^*)), 
\end{align}
where $\tP$ is a projector that quantizes the continuous weights $\wv_t^*$ either deterministically (by taking the sign) or stochastically. Note that the (sample) gradient is evaluated at the quantized weights $\wv_t \coloneqq \tP(\wv_t^*)$, while its continuous output, after scaled by the step size $\eta_t$, is added to the continuous weights $\wv_t^*$. 
For later comparison, it is useful to break the BC update into the following two pieces:
\begin{align}
\label{eq:BC}
\wv_{t} = \tP(\wv_{t}^*), 
\qquad
\wv_{t+1}^* = \wv_t^* - \eta_t \widetilde\nabla \ell(\wv_t)
.
\end{align}
Other choices of $\tP$~\citep[][]{ajanthan2021, yin2018} and $Q$~\citep[][]{yin2019, zhu2016} in this framework have also been experimented with.

\looseness=-1

\textbf{ProxQuant (PQ).} \citet{bai2018} applied the usual proximal gradient to solve \eqref{eq:prob}:
\begin{align}
\wv_{t+1} &= \tP(\wv_t - \eta_t \widetilde\nabla \ell(\wv_t)),
\end{align}
which can be similarly decomposed into:
\begin{align}
\wv_{t} = \tP(\wv_{t}^*), 
\qquad
\wv_{t+1}^* = \wv_t - \eta_t \widetilde\nabla \ell(\wv_t)
.
\end{align}
Thus, the only high-level difference between BC and PQ is that the former updates the continuous weights $\wv_t^*$ while the latter updates the quantized weights $\wv_t$. This seemingly minor difference turns out to cause drastically different behaviors of the two algorithms. For example, choosing $\tP$ to be the Euclidean projection to $Q$ works well for BinaryConnect but not at all for ProxQuant.

\textbf{Reversing BinaryConnect.} The above comparison naturally suggests a reversed variant of BC:
\begin{align}
\wv_{t} = \tP(\wv_{t}^*), 
\qquad
\wv_{t+1}^* = \wv_t - \eta_t \widetilde\nabla \ell(\wv_t^*), 
\end{align}
which amounts to switching the continuous and quantized weights in the BC update. Similar to PQ, the choice of $\tP$ is critical in this setup. To the best of our knowledge, this variant has not been formally studied before. Reversing BinaryConnect may not be intuitive, however, it serves as a starting point for a more general method (see~\Cref{sec:PC}).

\textbf{Post-Training Quantization.} Lastly, we can also rewrite the naive post-training quantization scheme in a similar form:
\begin{align}
\wv_{t} = \tP(\wv_{t}^*), 
\qquad
\wv_{t+1}^* &= \wv_t^* - \eta_t \widetilde\nabla \ell(\wv_t^*),
\end{align}
where we simply train the continuous network as usual and then quantize at the end. Note that the quantized weights $\wv_t$ do not affect the update of the continuous weights $\wv_t^*$.
\section{What Makes a Good Quantizer?}
\label{sec:prox}
As we have seen in~\Cref{sec:background}, the choice of the quantizer $\tP$ turns out to be a crucial element for solving \eqref{eq:prob}. Indeed, if $\tP = \tP_Q$ is the projector onto the discrete quantization set $Q$, then BC (and ProxQuant) only evaluate the gradient of $\ell$ at (the finite set of) points in $Q$. As a result, the methods will not be able to exploit the rich information carried in the continuous weights network, which can lead to non-convergence~\citep[Fig. 1b,][]{bai2018}. Since then, many semi-discrete quantizers, that turn continuous weights into more and more discrete ones, have been proposed~\citep[][]{ajanthan2021, yin2018, bai2018,nia2018binary}. 
In this section, we present a principled way to construct quantizers that unifies previous efforts, removes tedious calculations, and also brings theoretical convenience when it comes down to analyzing the convergence of the algorithms in \Cref{sec:background}. 

Our construction is based on the proximal map $\tP_{\rsf}^\peta \colon \R^d \rightrightarrows \R^d$ of a (closed) function $\rsf$:
\begin{align} \label{eq:proximal_operator}
    \tP_{\rsf}^\peta (\rvw^*) = \argmin_{\rvw} ~ \tfrac{1}{2\peta} \|\rvw - \rvw^*\|_2^2 + \rsf(\rvw),
\end{align}
where $\peta >0$ is a smoothing parameter. 
The proximal map is well-defined as long as the function $\rsf$ is lower bounded by a quadratic function, in particular, when $\rsf$ is bounded from below. 
If $\rsf$ is proper and (closed) convex, then the minimizer on the right-hand side of~\eqref{eq:proximal_operator} is uniquely attained, while for general (nonconvex) functions the proximal map may be multi-valued (hence the notation $\rightrightarrows$). If $\rsf = \iota_Q$ is the indicator function (see \eqref{eq:ind} below), then $\tP_{\rsf}^\peta$ reduces to the familiar projector $\tP_Q$ (for any $\peta$). Remarkably, 
a complete characterization of such maps on the real line is available: 
\begin{theorem}[{\cite[Proposition 3,][]{yu2015}}]
A (possibly multi-valued) map $\tP \colon \R \rightrightarrows \R$ is a proximal map (of some function $\rsf$) iff it is (nonempty) compact-valued, monotone and has a closed graph. The underlying function $\rsf$ is unique (up to addition of constants) iff $\tP$ is convex-valued, while $\rsf$ is convex iff $\tP$ is nonexpansive (i.e. $1$-Lipschitz continuous).
\label{thm:prox}
\end{theorem}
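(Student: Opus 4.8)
The plan is to normalize the smoothing parameter and reduce the whole claim to a statement about scalar functions, handling first the easy ``only if'' inclusion (which needs nothing special about the real line beyond finite dimensionality) and then the genuinely one-dimensional converse.

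\textbf{Reduction and the ``only if'' direction.} Because $\argmin_w \tfrac{1}{2\peta}\|w-w^*\|^2 + \rsf(w) = \argmin_w \tfrac12\|w-w^*\|^2 + \peta\rsf(w)$, one has $\tP_\rsf^\peta = \tP_{\peta\rsf}^1$, so it suffices to treat $\peta=1$; write $\tP_\rsf$ for $\tP_\rsf^1$. Assume $\tP = \tP_\rsf$. Each $\tP(w^*)$ is nonempty and compact because $w\mapsto \tfrac12(w-w^*)^2+\rsf(w)$ is lower semicontinuous (closedness of $\rsf$) and coercive (the quadratic dominates the assumed quadratic lower bound on $\rsf$). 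Closedness of the graph is the standard stability of $\argmin$: if $w_n^*\to w^*$, $w_n\in\tP(w_n^*)$, $w_n\to w$, take the limit in $\tfrac12(w_n-w_n^*)^2+\rsf(w_n)\le \tfrac12(u-w_n^*)^2+\rsf(u)$ using lower semicontinuity of $\rsf$. Monotonicity is the one ``proximal'' input: adding the optimality inequality for $w_1\in\tP(w_1^*)$ tested at $u=w_2$ to that for $w_2\in\tP(w_2^*)$ tested at $u=w_1$ cancels the $\rsf$-terms, and the quadratics collapse via
\[
\tfrac12(w_1-w_1^*)^2+\tfrac12(w_2-w_2^*)^2-\tfrac12(w_1-w_2^*)^2-\tfrac12(w_2-w_1^*)^2 = -(w_1-w_2)(w_1^*-w_2^*),
\]
so $(w_1-w_2)(w_1^*-w_2^*)\ge 0$.

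\textbf{The ``if'' direction.} Suppose $\tP$ is compact-valued, monotone, has a closed graph, and is nonempty on all of $\R$. Rotate its graph in $\R^2$ by $45^\circ$, with the orientation for which monotonicity becomes exactly the assertion that the rotated set is the graph of a $1$-Lipschitz function $\psi$; the remaining hypotheses then force $\psi$ to be defined on all of $\R$ with $\id+\psi$ surjective. To recover $\rsf$, pass to $g := \rsf + \tfrac12(\cdot)^2$, for which completing the square gives $\tP_\rsf(w^*)=\argmax_w\,[\,ww^*-g(w)\,]$; so the task is to produce a closed $g$, bounded below enough that $\rsf = g-\tfrac12(\cdot)^2$ has a quadratic lower bound, whose ``exposed-point map'' $w^*\mapsto\argmax_w[ww^*-g(w)]$ equals $\tP$. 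Construct $g$ as a locally Lipschitz antiderivative of a nondecreasing selection of the inverse relation $\tP^{-1}$ (well defined off a countable set by monotonicity and local boundedness) and then verify, for each $w^*$, that $\argmax_w[ww^*-g(w)]$ coincides with $\tP(w^*)$. I expect this last verification to be the crux: it is where the nonconvex ``flat'' stretches of $g$ must be matched exactly to the vertical (multi-valued) stretches of the graph of $\tP$, and where one must check that integrating only a selection has neither destroyed nor created global maximizers.

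\textbf{The two refinements.} If $\rsf$ is convex, then $\partial\rsf$ is maximal monotone on $\R$ and $\tP_\rsf=(\id+\partial\rsf)^{-1}$ is its resolvent, hence single-valued and firmly nonexpansive, in particular $1$-Lipschitz. Conversely, feeding nonexpansiveness of $\tP$ through the same $45^\circ$ rotation shows $\psi$ is additionally monotone, equivalently $\tP^{-1}$ is $1$-strongly monotone; its maximal monotone extension is then $\partial\phi$ for a $1$-strongly convex $\phi$, and tracing back through $\rsf=\phi-\tfrac12(\cdot)^2$ yields a convex $\rsf$ inducing $\tP$, settling the equivalence. For uniqueness: if $\rsf_1,\rsf_2$ induce the same $\tP$, then $g_1$ and $g_2$ have the same derivative at every point of $\mathrm{range}(\tP)$, pinning down $g$ — hence $\rsf$ — up to an additive constant; convex-valuedness of $\tP$ is precisely the condition that leaves no ``gap'' in the graph on which $\rsf$ could be raised without disturbing any $\argmax$, while a nonconvex fiber exhibits exactly such a gap and hence non-uniqueness.
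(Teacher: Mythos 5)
The paper does not prove this result at all; it is quoted verbatim from \citet[Proposition~3]{yu2015}, so there is no in-paper proof to compare your attempt against. Judged on its own terms, your reduction to $\peta=1$ and the whole ``only if'' direction are correct and complete: nonemptiness/compactness from lower semicontinuity plus coercivity, closed graph from stability of $\argmin$ under l.s.c.\ perturbations, and monotonicity from summing the two optimality inequalities, where your algebraic identity is exact. Your plan for the ``if'' direction---rotate the graph by $45^\circ$ to turn monotonicity plus closedness into a $1$-Lipschitz parametrization, pass to $g=\rsf+\tfrac12(\cdot)^2$ so that $\tP(w^*)=\argmax_w\,[\,ww^*-g(w)\,]$, and construct $g$ by integrating a nondecreasing selection of $\tP^{-1}$---is also the right skeleton for this kind of univariate characterization.

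But you flag the crux yourself and leave it unproven, and that is a genuine gap rather than a routine detail to be filled in. Two things must actually be argued. First, the integral of a selection of $\tP^{-1}$ is only determined on the (closed) range of $\tP$; you must extend $g$ across the gaps of the range and then verify that no point in a gap ever wins the global maximum of $w\mapsto ww^*-g(w)$ for any $w^*$. Second, you must check that \emph{every} point of each fiber $\tP(w^*)$---not merely the points hit by the chosen selection---attains the global maximum, and that no spurious local maximizer outside $\tP(w^*)$ ties it. This is precisely the ``flat stretch of $g$ versus vertical stretch of the graph'' matching you mention; it requires a genuine area-comparison argument between the accumulated integral of the selection and the linear term $ww^*$, and it is where the interplay of monotonicity, closed graph, and compactness has to be cashed in. Until that is done, one cannot even be sure the constructed $\rsf=g-\tfrac12(\cdot)^2$ admits a quadratic lower bound so that the prox is defined, which you assert but do not derive. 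The two refinements inherit the same status: the resolvent argument for ``convex $\Rightarrow$ nonexpansive'' is fine, but ``nonexpansive $\Rightarrow$ convex'' and the uniqueness-iff-convex-valued equivalence are stated at the level of mechanism (strong monotonicity of $\tP^{-1}$, ``gaps'' that allow $\rsf$ to be raised) without the verification that the mechanism does what you say. As written this is an accurate outline, not a proof.
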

The sufficient and necessary conditions of~\Cref{thm:prox} allow one to design proximal maps $\tP$ directly, without needing to know the underlying function $\rsf$ at all (even though it is possible to integrate a version of $\rsf$ from $\tP$). The point is that, as far as quantization algorithms are concerned, having $\tP$ is enough, and hence one is excused from the tedious calculations in deriving $\tP$ from $\rsf$ as is typical in existing works. 

For example, the (univariate) mirror maps constructed in~\citet[Theorem 1,][]{ajanthan2021}, such as $\tanh$, are proximal maps according to \Cref{thm:prox}: In our notation, the (Mirror Descent) updates of~\citet{ajanthan2021} are the same as~\eqref{eq:BC} with $\tP = (\nabla \Phi)^{-1}$ for some mirror map $\Phi$. Since $\Phi$ is taken to be strictly convex in~\citet{ajanthan2021}, it is easy to verify that $\tP$ satisfies all conditions of~\Cref{thm:prox}.\footnote{The above reasoning hinges on $\Phi$ being univariate. More generally, if a multivariate mirror map $\Phi$ is 1-strongly convex (as is typical in Mirror Descent), then it follows from \citet[Corollaire 10.c,][]{moreau1965proximite} that $\tP = (\nabla \Phi)^{-1}$, being a nonexpansion, is again a proximal map (of some convex function).}

The next result allows us to employ \emph{stochastic} quantizers, where in each iteration we randomly choose one of $\tP_i$ to quantize the weights\footnote{This form of stochasticity still leads to \emph{determinisitc networks}, and is therefore conceptually different from \emph{probabilistic} (quantized) networks~\citep{peters2018probabilistic, shayer2018learning}.}. We may also apply different quantizers to different layers of a neural network.
\begin{restatable}{theorem}{proxmap}\label{thm:prox_maps}
Let $\tP_i: \R^d\rightrightarrows\R^d, i = [k]$, be proximal maps. Then, the averaged map 
\begin{align}
\textstyle
\label{eq:pa}
\tP \coloneqq \sum_{i=1}^k \alpha_i \tP_i, \qquad \mbox{ where } \alpha_i \geq 0, ~~\sum_{i=1}^k \alpha_i = 1,
\end{align}
is also a proximal map. Similarly, the product map 
\begin{align}
\textstyle
\label{eq:pp}
\tP \coloneqq \tP_1 \times \tP_2 \times \cdots \times \tP_k, ~~ \wv^* = (\wv_1^*, \ldots, \wv_k^*) \mapsto \big(\tP_1(\wv_1^*), \ldots, \tP_k(\wv_k^*) \big)
\end{align}
is a proximal map (from $\R^{dk}$ to $\R^{dk}$).
\end{restatable}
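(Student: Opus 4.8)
The plan is to treat the two constructions separately: \eqref{eq:pp} follows almost immediately from the definition, while \eqref{eq:pa} carries the real content.

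\emph{Product map.} I would first normalize the smoothing parameters: scaling the objective in \eqref{eq:proximal_operator} by $\peta>0$ leaves the $\argmin$ unchanged, so $\tP_{\rsf}^{\peta}=\tP_{\peta\rsf}^{1}$, and after rescaling we may write each $\tP_i=\tP_{\rsf_i}^{1}$ with $\rsf_i$ bounded below by a quadratic. Taking the separable sum $\rsf(\rvw)\coloneqq\sum_{i=1}^{k}\rsf_i(\rvw_i)$, the objective $\tfrac12\sum_i\|\rvw_i-\rvw_i^*\|_2^2+\sum_i\rsf_i(\rvw_i)$ decouples across the blocks, so its minimizer set is exactly the Cartesian product of the per-block minimizer sets; hence $\tP_{\rsf}^{1}=\tP_1\times\cdots\times\tP_k$, and $\rsf$ is again lower bounded by a quadratic (a sum of quadratics). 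No convexity or single-valuedness is used, so the multi-valued case is covered verbatim.

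\emph{Averaged map.} Normalize again so that $\tP_i=\tP_{\rsf_i}^{1}$. When every $\rsf_i$ is proper, closed and convex, the idea is to exhibit $\tP$ as the gradient of a convex function that is also nonexpansive, and then invoke Moreau's correspondence — the multivariate analogue of the convex part of \Cref{thm:prox} already used in the footnote above \citep{moreau1965proximite}. Writing $e_i$ for the Moreau envelope of $\rsf_i$, one has $\tP_i=\id-\nabla e_i=\nabla h_i$ with $h_i\coloneqq\tfrac12\|\cdot\|_2^2-e_i=\big(\tfrac12\|\cdot\|_2^2+\rsf_i\big)^{\!*}$ convex; hence $\tP=\sum_i\alpha_i\tP_i=\nabla\!\big(\sum_i\alpha_i h_i\big)$ is the gradient of a convex function, and being a convex combination of nonexpansions it is nonexpansive, so it is a proximal map. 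For the general (possibly nonconvex, multi-valued) case I would reduce to the scalar setting via the product-map construction just proved, and apply \Cref{thm:prox} directly: a convex combination, i.e.\ a scaled Minkowski sum, of nonempty compact-valued, monotone, closed-graph maps is again nonempty compact-valued (finite Minkowski sums of compacts are compact), monotone (preserved under nonnegative combinations), and closed-graph, hence a proximal map.

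The step I expect to be the main obstacle is this last one: unlike the product map, there is no transparent formula for the underlying function of the averaged map, and monotonicity — automatic for scalar proximal maps by \Cref{thm:prox}, but not guaranteed for nonconvex proximal maps on $\R^d$ with $d\ge2$ — is precisely the structural fact that drives the scalar argument and forces the reduction to coordinates. Within that argument the one delicate point is closedness of the graph of the combined map, which I would obtain from compact-valuedness together with local boundedness of monotone maps on the interior of their domains: this lets one pass to a subsequence along which a convergent sequence in the graph of $\sum_i\alpha_i\tP_i$ splits into convergent sequences in the graphs of the individual $\tP_i$, whose limits lie in those closed graphs.
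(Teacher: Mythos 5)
Your product-map argument is the same as the paper's: both identify the product as the proximal map of the separable sum $\rsf(\wv^*) = \sum_i \rsf_i(\wv_i^*)$. Your convex-case argument for the averaged map is correct and more explicit than the paper's: writing $\tP_i = \nabla h_i$ with $h_i = \big(\tfrac{1}{2}\|\cdot\|_2^2 + \rsf_i\big)^*$ convex shows that $\tP = \nabla\big(\sum_i\alpha_i h_i\big)$ is the gradient of a convex function and, being a convex combination of nonexpansions, is nonexpansive; Moreau's correspondence then gives the claim. The paper does not spell this out --- it defers the whole averaging statement to \cite[Proposition 4,][]{yu2015}.

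The genuine gap is the general multivariate, nonconvex case. You propose to ``reduce to the scalar setting via the product-map construction just proved'' and then invoke \Cref{thm:prox}, but that construction goes the wrong way: it assembles a higher-dimensional proximal map from given lower-dimensional ones and does \emph{not} decompose an arbitrary $\tP_i\colon\R^d\rightrightarrows\R^d$ into a product of univariate proximal maps. Such a decomposition simply does not exist for proximal maps of functions that couple coordinates. You in fact name the exact obstruction --- the compact-valued, monotone, closed-graph characterization of \Cref{thm:prox} fails on $\R^d$ for $d\ge 2$ --- but the ``reduction to coordinates'' you reach for is not licensed by the hypotheses, so your argument stalls precisely where you predict it will. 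The paper covers this case by citing \cite[Proposition 4,][]{yu2015}, and it also handles a definitional point you pass over: since proximal maps are only almost-everywhere single-valued, the paper interprets $\sum_i\alpha_i\tP_i$ as the pointwise sum on the full-measure set where every $\tP_i$ is a singleton and then takes the closure of its graph; this is generally \emph{not} the same as the pointwise Minkowski sum you describe (averaging $\sign$ with itself gives $\{-1,1\}$ at zero under graph closure but $\{-1,0,1\}$ under the Minkowski reading), so your closing discussion of closed-graph-ness is aimed at a different object.
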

(The proof of~\Cref{thm:prox_maps} and all other omitted proofs can be found in~\Cref{sec:app-proof}.)

\begin{example}
\label{exm:br}
Let $Q$ be a quantization set (e.g. $Q = \{-1, 0, 1\}$). Clearly, the identity map $\id$ and the projector $\tP_Q$ are proximal maps. Therefore, their convex combination 
$
\tP^\peta = \tfrac{\id + \peta \tP_Q}{1+\peta}, \peta \geq 0, 
$
is also a proximal map, which is exactly the quantizer used in~\citet{yin2018}.
\end{example}

Lastly, we mention that it is intuitively desirable to have $\tP^\peta \to \tP_Q$ when $\peta$ increases indefinitely, so that the quantizer eventually settles on \emph{bona fide} discrete values in $Q$. This is easily achieved by letting minimizers of the underlying function $\rsf$, or the (possibly larger set of) fixed points of $\tP^\peta$, approach $Q$. 
We give one such construction by exploiting \Cref{thm:prox} and \Cref{thm:prox_maps}.

\subsection{General Piecewise Linear Quantizers}
Let $Q_j = \{q_{j, k}\}_{k=1}^{b_j}$ (with $q_{j, 1} \leq \cdots \leq q_{j, b_j}$) be the quantization set for the $j$-th weight group\footnote{In this section, time subscripts are not needed. Instead, we use subscripts to indicate weight groups.} $\rvw_j^* \in \R^{d_j}$. Let $p_{j, k+1} \coloneqq \tfrac{q_{j, k} + q_{j, k+1}}{2}$, $k \in [b_j - 1]$, be the middle points. We introduce two parameters $\rho, \varrho \geq 0$ and define 
\begin{alignat}{3}
&\mbox{horizontal shifts:} \qquad 
&&q_{j,k}^{-} \coloneqq p_{j,k} \vee (q_{j,k} - \rho), ~~ &&q_{j, k}^{+} \coloneqq p_{j,k+1} \wedge (q_{j, k} + \rho), \\
&\mbox{vertical shifts:} \qquad 
&&p_{j,k+1}^{-} \coloneqq q_{j,k} \vee (p_{j,k+1} - \varrho), ~~ &&p_{j, k+1}^{+} \coloneqq q_{j,k+1} \wedge (p_{j, k+1} + \varrho),
\end{alignat}
with $q_{j,1}^{-} = q_{j,1}$ and $q_{j,b_j}^{+} = q_{j, b_j}$. Then, we define $\LP{\rho}$ as the piece-wise linear map (that simply connects the points by straight lines):
\begin{align}
\label{eq:prox-pl}
\LP{\rho}(w^*) \coloneqq \begin{cases}
q_{j,k}, & \mbox{ if } q_{j,k}^- \leq w^* \leq q_{j,k}^+ \\
q_{j,k} + (w^*-q_{j,k}^+) \tfrac{p_{j,k+1}^- - q_{j,k}}{p_{j,k+1} - q_{j,k}^+}, & \mbox{ if } q_{j,k}^+ \leq w^* < p_{j,k+1} \\
p_{j,k+1}^+ + (w^*-p_{j,k+1}) \tfrac{q_{j,k+1} - p_{j,k+1}^+}{q_{j,k+1}^- - p_{j,k+1}}, & \mbox{ if } p_{j,k+1} < w^* \leq q_{j,k+1}^-, 
\end{cases}
\end{align}
for all $w^* \in \rvw_j^*$. At the middle points, $\LP{\rho}(p_{j,k+1})$ may take any value within the two limits. Following the commonly used weight clipping in BC~\citep{courbariaux2015}, we may set $\LP{\rho}(w^*) = q_{j,1}$ for $w^* < q_{j,1}$ and $\LP{\rho}(w^*) = q_{j, b_j}$ for $w^* > q_{j,b_j}$, however, other choices may also work well. The proximal map $\LP{\rho}$ of an example ternary component is visualized in~\Cref{fig:prox_op_evo} for different choices of $\rho$ and $\varrho$.

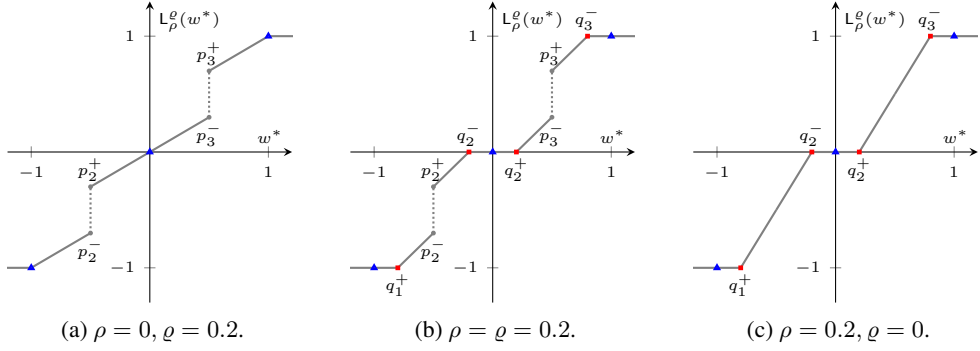
\begin{figure}
    \centering
    \begin{subfigure}[b]{0.32\textwidth}
        \centering
        \begin{tikzpicture}
    [declare function={
        func1(\x) = -1.0;
        func2(\x) = (\x <= -0.5 ) * (0.6*\x - 0.4);
        func3(\x) = and(\x >= -0.5, \x <= 0.5) * (0.6 * \x);
        func4(\x) = (\x >= 0.5) * (0.6 * \x + 0.4);
        func5(\x) = 1.0;}]
       
    \begin{axis}
        [width=1.2\linewidth,
         height=1.25\linewidth,
         axis x line=middle, 
         axis y line=middle,
         ymin=-1.3, ymax=1.3, ytick={-1,...,1}, ylabel=$\LP{\rho}(w^*)$,
         xmin=-1.2, xmax=1.2, xtick={-1,...,1}, xlabel=$w^*$,
         samples=101, every axis plot/.append style={thick},
         soldot/.style={color=gray,only marks,mark=*, mark size=0.5pt},
         holdot/.style={color=blue,only marks,mark=triangle, mark size=1pt},
         voldot/.style={color=red,only marks,mark=square, mark size=0.5pt},
         label style={font=\tiny},
         tick label style={font=\tiny}]
           
        \addplot[domain=-1.7:-1.0, gray] {func1(x)}; 
        \addplot[domain=-1.0:-0.5, gray] {func2(x)};
        \addplot[domain=-0.5:0.5, gray] {func3(x)};
        \addplot[domain=0.5:1.0, gray] {func4(x)};
        \addplot[domain=1.0:1.7, gray] {func5(x)};
        
        \addplot[soldot]coordinates{(-0.5,-0.7)(-0.5, -0.3)(0.5, 0.3)(0.5, 0.7)};
        \addplot[holdot]coordinates{(-1,-1)(0, 0)(1, 1)};
        %\addplot[voldot]coordinates{(-1.2,-1)(-0.8,-1)(-0.2, 0)(0.2, 0)(0.8, 1)(1.2, 1)};
        
        \addplot[gray, densely dotted] coordinates {(-0.5,-0.7)(-0.5,-0.3)};
        \addplot[gray, densely dotted] coordinates {(0.5,0.3)(0.5,0.7)};
        
        \node (p2minus) at (axis cs:-0.5, -0.85) {\tiny $p_2^-$};
        \node (p2plus) at (axis cs:-0.5, -0.15) {\tiny $p_2^+$};
        \node (p3minus) at (axis cs:0.5, 0.15) {\tiny $p_3^-$};
        \node (p3plus) at (axis cs:0.5, 0.85) {\tiny $p_3^+$};
    \end{axis}
\end{tikzpicture} 
        \caption{$\rho=0, \varrho = 0.2$.}
    \end{subfigure}
    \begin{subfigure}[b]{0.32\textwidth}
        \centering
        \begin{tikzpicture}
    [declare function={
        func1(\x) = -1.0;
        func2(\x) = (\x < -1.2 ) * (\x + 0.2) + and(\x >= -1.2, \x <= -0.8) * -1 + and(\x > -0.8, \x <= -0.5) * (\x - 0.2);
        func3(\x) = and(\x >= -0.5, \x < -0.2) * (\x + 0.2) + and(\x >= -0.2, \x <=0.2) * 0 + and(\x > 0.2, \x <= 0.5) * (\x -0.2);
        func4(\x) = and(\x >= 0.5, \x < 0.8) *(\x + 0.2) + and(\x >= 0.8, \x <= 1.2) * 1 + (\x > 1.2) * (\x - 0.2);
        func5(\x) = 1.0;}]
       
    \begin{axis}
        [width=1.2\linewidth,
         height=1.25\linewidth,
         axis x line=middle, 
         axis y line=middle,
         ymin=-1.3, ymax=1.3, ytick={-1,...,1}, ylabel=$\LP{\rho}(w^*)$,
         xmin=-1.2, xmax=1.2, xtick={-1,...,1}, xlabel=$w^*$,
         samples=101, every axis plot/.append style={thick},
         soldot/.style={color=gray,only marks,mark=*, mark size=0.5pt},
         holdot/.style={color=blue,only marks,mark=triangle, mark size=1pt},
         voldot/.style={color=red,only marks,mark=square, mark size=0.5pt},
         label style={font=\tiny},
         tick label style={font=\tiny}]
            
        \addplot[domain=-1.7:-1.0, gray] {func1(x)};
        \addplot[domain=-1.0:-0.5, gray] {func2(x)};
        \addplot[domain=-0.5:0.5, gray] {func3(x)};
        \addplot[domain=0.5:1.0, gray] {func4(x)};
        \addplot[domain=1.0:1.7, gray] {func5(x)};
        
        \addplot[soldot] coordinates{(-0.5,-0.7)(-0.5, -0.3)(0.5, 0.3)(0.5, 0.7)};
        \addplot[holdot] coordinates{(-1,-1)(0, 0)(1, 1)};
        \addplot[voldot] coordinates{(-0.8,-1)(-0.2, 0)(0.2, 0)(0.8, 1)};
        
        \addplot[gray, densely dotted] coordinates {(-0.5,-0.7)(-0.5,-0.3)};
        \addplot[gray, densely dotted] coordinates {(0.5,0.3)(0.5,0.7)};
        
        \node (q1plus) at (axis cs:-0.8,-1.15) {\tiny $q_1^+$};
        \node (p2minus) at (axis cs:-0.5, -0.85) {\tiny $p_2^-$};
        \node (p2plus) at (axis cs:-0.5, -0.15) {\tiny $p_2^+$};
        \node (q2minus) at (axis cs:-0.2, 0.15) {\tiny $q_2^-$};
        \node (q2plus) at (axis cs:0.2, -0.15) {\tiny $q_2^+$};
        \node (p3minus) at (axis cs:0.5, 0.15) {\tiny $p_3^-$};
        \node (p3plus) at (axis cs:0.5, 0.85) {\tiny $p_3^+$};
        \node (q3minus) at (axis cs:0.8,1.15) {\tiny $q_3^-$};
    \end{axis}
\end{tikzpicture} 
        \caption{$\rho=\varrho = 0.2$.}
    \end{subfigure}
    \begin{subfigure}[b]{0.32\textwidth}
        \centering
        \begin{tikzpicture}
    [declare function={
        func1(\x) = -1.0;
        func2(\x) = (\x < -1.2 ) * (5/3*(\x + 1.5) - 1.5) + and(\x >= -1.2, \x <= -0.8) * -1 + and(\x > -0.8, \x <= -0.5) * (5/3*(\x +0.5) - 0.5);
        func3(\x) = and(\x >= -0.5, \x < -0.2) * (5/3*(\x +0.5) - 0.5) + and(\x >= -0.2, \x <=0.2) * 0 + and(\x > 0.2, \x <= 0.5) * (5/3*(\x -0.5) + 0.5);
        func4(\x) = and(\x >= 0.5, \x < 0.8) *(5/3*(\x -0.5) + 0.5) + and(\x >= 0.8, \x <= 1.2) * 1 + (\x > 1.2) * (5/3*(\x - 1.5) + 1.5);
        func5(\x) = 1.0;}]
       
    \begin{axis}
        [width=1.2\linewidth,
         height=1.25\linewidth,
         axis x line=middle, 
         axis y line=middle,
         ymin=-1.3, ymax=1.3, ytick={-1,...,1}, ylabel=$\LP{\rho}(w^*)$,
         xmin=-1.2, xmax=1.2, xtick={-1,...,1}, xlabel=$w^*$,
         samples=101, every axis plot/.append style={thick},
         soldot/.style={color=gray,only marks,mark=*, mark size=0.5pt},
         holdot/.style={color=blue,only marks,mark=triangle, mark size=1pt},
         voldot/.style={color=red,only marks,mark=square, mark size=0.5pt},
         label style={font=\tiny},
         tick label style={font=\tiny}]
            
        \addplot[domain=-1.7:-1.0, gray] {func1(x)};
        \addplot[domain=-1.0:-0.5, gray] {func2(x)};
        \addplot[domain=-0.5:0.5, gray] {func3(x)};
        \addplot[domain=0.5:1.0, gray] {func4(x)};
        \addplot[domain=1.0:1.7, gray] {func5(x)};
        
 %       \addplot[soldot]coordinates{(-0.5,-0.7)(-0.5, -0.3)(0.5, 0.3)(0.5, 0.7)};
        \addplot[holdot]coordinates{(-1,-1)(0, 0)(1, 1)};
        \addplot[voldot]coordinates{(-0.8,-1)(-0.2, 0)(0.2, 0)(0.8, 1)};
        
        \node (q1plus) at (axis cs:-0.8,-1.15) {\tiny $q_1^+$};
        \node (q2minus) at (axis cs:-0.2, 0.15) {\tiny $q_2^-$};
        \node (q2plus) at (axis cs:0.2, -0.15) {\tiny $q_2^+$};
        \node (q3minus) at (axis cs:0.8,1.15) {\tiny $q_3^-$};
    \end{axis}
\end{tikzpicture} 
        \caption{$\rho=0.2, \varrho = 0$.}
    \end{subfigure}
    \caption{Different instantiations of the proximal map $\LP{\rho}$ in \eqref{eq:prox-pl} for $Q = \{-1, 0, 1\}$.}
    \vspace{-0em}
    \label{fig:prox_op_evo}
\end{figure}

The (horizontal) parameter $\rho$ controls the discretization vicinity within which a continuous weight will be pulled \emph{exactly} into the discrete set $Q_j$, while the (vertical) parameter $\varrho$ controls the slope (i.e. expansiveness) of each piece. It follows at once from \Cref{thm:prox} that $\LP{\rho}$ is indeed a proximal map. In particular, setting $\rho = 0$ (hence continuous weights are only discretized in the limit) and $\varrho = \tfrac{\peta}{2(1+\peta)}$ we recover \Cref{exm:br} (assuming w.l.o.g. that $q_{j,k+1} - q_{j,k} \equiv 1$). On the other hand, setting $\rho = \varrho$ leads to a generalization of the (binary) quantizer in~\citet{bai2018}, which keeps the slope to the constant 1 (while introducing jumps at middle points) and happens to be the proximal map of the distance function to $Q_j$~\citep{bai2018}. Of course, setting $\rho = \varrho = 0$ yields the identity map and allows us to skip quantizing certain weights (as is common practice), while letting $\rho, \varrho \to \infty$ recovers the projector $\tP_{Q_j}$.

Needless to say, we may adapt the quantization set $Q_j$ and the parameters $\varrho$ and $\rho$ for different weight groups, creating a multitude of quantization schemes. By~\Cref{thm:prox_maps}, the overall operator remains a proximal map.
\section{Demystifying BinaryConnect (BC)} \label{sec:method}
\citet{bai2018} observed that the updates of BC are formally the same as the dual averaging (DA) algorithm \citep{nesterov2009,xiao2010}, even though the latter algorithm was originally proposed and analyzed only for convex problems. A lesser known fact is that (regularized) dual averaging itself is a special case of the generalized conditional gradient (GCG) algorithm. In this section, we first present the aforementioned fact, refine the observation of \citet{bai2018}, and set up the stage for generalizing BC.
\vspace{-1mm}
\subsection{Generalized Conditional Gradient is Primal-Dual}
\vspace{-1mm}
We first present the generalized conditional gradient (GCG) algorithm~\citep{BrediesLorenz08,YuZS17} and point out its ability to solve simultaneously the primal and dual problems. 

Let us consider the following ``regularized'' problem:
\begin{align}
\label{eq:prob-p}
\min_{\rvw \in \R^d} ~ f(\rvw)\coloneqq \ell(\rvw) + \rsf(\rvw),
\end{align}
where $\rsf$ is a general (nonconvex) regularizer. Setting $\rsf$ to the indicator function of $Q$, i.e.,
\begin{align}
\label{eq:ind}
\rsf(\rvw) = \iota_{Q}(\rvw) = \begin{cases} 
0, & \mbox{ if } \rvw\in Q \\ 
\infty, & \mbox{ otherwise } 
\end{cases}
,
\end{align} 
reduces \eqref{eq:prob-p} to the original problem \eqref{eq:prob}. As we will see, incorporating an arbitrary $\rsf$ does not add any complication but will allow us to immediately generalize BC. 

Introducing the Fenchel conjugate function $\ell^*$ (resp. $\rsf^*$) of $\ell$ (resp. $\rsf$): 
\begin{align}
\textstyle
\ell^*(\wv^*) \coloneqq \sup_{\wv} ~ \inner{\wv}{\wv^*} - \ell(\wv),
\end{align}
which is always (closed) convex even when $\ell$ itself is nonconvex,
we state the Fenchel--Rockafellar dual problem \citep[][]{RockafellarWets98}:
\begin{align}
\label{eq:prob-d}
\min_{\rvw^* \in \R^d} ~ \ell^*(-\rvw^*) + \rsf^*(\rvw^*),
\end{align}
which, unlike the original problem \eqref{eq:prob-p}, is always a convex problem.

We apply the generalized conditional gradient algorithm~\citep{BrediesLorenz08,YuZS17} to solving the dual problem\footnote{GCG is usually applied to solving the primal problem \eqref{eq:prob-p} directly. Our choice of the dual problem here is to facilitate later comparison with dual averaging and BinaryConnect.} \eqref{eq:prob-d}: Given $\wv_t^*$, we linearize the function $\rsf^*$ and solve
\begin{align}
\zv_{t}^* = \left[\argmin_{\wv^*} ~ \ell^*(-\wv^*) + \inner{\wv^*}{\wv_t} \right]  =  -\nabla \ell^{**}(\wv_t), ~~ \wv_t \coloneqq \nabla\rsf^*(\wv_t^*),
\end{align}
where we used the fact that $(\nabla\ell^*)^{-1} = \nabla \ell^{**}$. 
Then, we take the convex combination 
\begin{align}
\wv_{t+1}^* = (1-\lambda_t)\wv_t^* + \lambda_t \zv_{t}^*, ~~\mbox{ where } \lambda_t \in [0,1].
\end{align}
The following theorem extends 
\citet[Proposition 4.2,][]{Bach13a} and \citet[Theorem 4.6,][]{Yu13} to any $\lambda_t$: %step size 
\begin{restatable}{theorem}{gcg}\label{thm:gcg}
Suppose $\rsf^*$ is $L$-smooth (i.e. $\nabla\rsf^*$ is $L$-Lipschitz continuous), then for any $\wv$:
\begin{align}
\label{eq:gcg}
\sum_{\tau=0}^t \tfrac{\lambda_\tau}{\pi_\tau} [(\ell^{**} + \rsf^{**})(\wv_\tau) - (\ell^{**} + \rsf^{**})(\wv)]\leq 
(1-\lambda_0)\Delta(\wv,\wv_0) + \sum_{\tau=0}^t \tfrac{\lambda_\tau^2}{2\pi_\tau} L\|\wv_\tau^* - \zv_\tau^*\|_2^2,
\end{align}
where $\wv_t \coloneqq \nabla \rsf^*(\wv_t^*)$, $\pi_t \coloneqq \prod_{\tau=1}^t (1-\lambda_\tau)$, and $\pi_0 \coloneqq 1$. $\Delta(\wv,\wv_t) \coloneqq \rsf^{**}(\wv) - \rsf^{**}(\wv_t) - \inner{\wv-\wv_t}{\wv_t^*}$ is the Bregman divergence induced by the convex function $\rsf^{**}$.
\end{restatable}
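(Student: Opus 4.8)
The plan is to recognize \eqref{eq:gcg} as the classical conditional-gradient (Frank--Wolfe) estimate, but run on the dual \eqref{eq:prob-d} with the biconjugate primal $P\coloneqq\ell^{**}+\rsf^{**}$ playing the role of the objective being minimized. Concretely, I would first prove the one-step ``contraction''
\begin{align*}
\Delta(\wv,\wv_{t+1}) \;\leq\;& (1-\lambda_t)\,\Delta(\wv,\wv_t)\;-\;\lambda_t\big[(\ell^{**}+\rsf^{**})(\wv_t)-(\ell^{**}+\rsf^{**})(\wv)\big]\\
&\;+\;\tfrac{L\lambda_t^2}{2}\,\|\wv_t^*-\zv_t^*\|_2^2,
\end{align*}
and then deduce \eqref{eq:gcg} by the usual device: divide by $\Pi_{t+1}\coloneqq\prod_{\tau=0}^{t}(1-\lambda_\tau)=(1-\lambda_0)\pi_t$ (so that $(1-\lambda_t)\Delta(\wv,\wv_t)/\Pi_{t+1}$ collapses to $\Delta(\wv,\wv_t)/\Pi_t$), sum over $t=0,\dots,T$, telescope the left-hand side, and discard the nonnegative terminal term $\Delta(\wv,\wv_{T+1})\ge 0$. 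Carrying the \emph{general} product $\pi_t$ through this last step, rather than specializing the step sizes, is exactly what upgrades \citet[Proposition 4.2,][]{Bach13a} and \citet[Theorem 4.6,][]{Yu13} to arbitrary $\lambda_t$.

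Before the one-step bound I would set up the conjugate-duality bookkeeping. Because $\rsf^*$ is convex (always) and, by hypothesis, $L$-smooth, $\nabla\rsf^*$ is single-valued, so $\wv_t=\nabla\rsf^*(\wv_t^*)$ is well defined and is equivalent to $\wv_t^*\in\partial\rsf^{**}(\wv_t)$; this yields the Fenchel--Young equality $\rsf^{**}(\wv_t)+\rsf^*(\wv_t^*)=\inner{\wv_t}{\wv_t^*}$, and hence the convenient rewriting $\Delta(\wv,\wv_t)=\rsf^{**}(\wv)+\rsf^*(\wv_t^*)-\inner{\wv}{\wv_t^*}$. From the inner minimization defining $\zv_t^*$, first-order optimality reads $\wv_t\in\partial\ell^*(-\zv_t^*)$, i.e.\ $-\zv_t^*\in\partial\ell^{**}(\wv_t)$ (this is the relation $(\nabla\ell^*)^{-1}=\nabla\ell^{**}$ from the excerpt, read at the level of subgradients, and is the only place where convexity of $\ell^{**}$, rather than of $\ell$, is used).

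For the one-step bound, combine the rewriting of $\Delta$ with the update $\wv_{t+1}^*-\wv_t^*=\lambda_t(\zv_t^*-\wv_t^*)$ to get $\Delta(\wv,\wv_{t+1})-\Delta(\wv,\wv_t)=\rsf^*(\wv_{t+1}^*)-\rsf^*(\wv_t^*)-\lambda_t\inner{\wv}{\zv_t^*-\wv_t^*}$; then apply the descent lemma for $L$-smooth convex functions to $\rsf^*$ (using $\nabla\rsf^*(\wv_t^*)=\wv_t$), which converts this difference into $\lambda_t\inner{\wv_t-\wv}{\zv_t^*-\wv_t^*}+\tfrac{L\lambda_t^2}{2}\|\zv_t^*-\wv_t^*\|_2^2$. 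Finally, split $\inner{\wv_t-\wv}{\zv_t^*-\wv_t^*}=\inner{\wv_t-\wv}{\zv_t^*}+\inner{\wv-\wv_t}{\wv_t^*}$: the first summand is $\le\ell^{**}(\wv)-\ell^{**}(\wv_t)$ by the subgradient inequality for $\ell^{**}$ at $\wv_t$ (with subgradient $-\zv_t^*$), while the second summand equals $\rsf^{**}(\wv)-\rsf^{**}(\wv_t)-\Delta(\wv,\wv_t)$ directly by the definition of $\Delta$. Adding these gives $\inner{\wv_t-\wv}{\zv_t^*-\wv_t^*}\le P(\wv)-P(\wv_t)-\Delta(\wv,\wv_t)$, and substituting back produces the claimed contraction.

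The telescoping and rearrangement then produce \eqref{eq:gcg} verbatim. I expect the main obstacle to be not the algebra of the telescope but the care required for the conjugate-duality step in the nonconvex regime: working consistently with $\ell^{**}$ and $\rsf^{**}$, invoking Fenchel--Young \emph{equalities} only at the points where the corresponding subgradient relations actually hold, and justifying that $\nabla\rsf^*$ and $\zv_t^*=-\nabla\ell^{**}(\wv_t)$ are genuinely well defined along the iterates. There is also a harmless edge case $\lambda_\tau=1$ (where $\Pi_{\tau+1}=0$): one either restricts to $\lambda_\tau\in[0,1)$, or observes that such a step simply reinitializes the recursion; and the nonnegativity $\Delta(\wv,\wv_t)\ge 0$ used to discard the terminal term is nothing but convexity of $\rsf^{**}$.
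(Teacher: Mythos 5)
Your proof is correct, and the mathematical ingredients are the same as the paper's (Fenchel--Young at $\wv_t^*\in\partial\rsf^{**}(\wv_t)$, the descent lemma for the $L$-smooth $\rsf^*$, the subgradient inequality for $\ell^{**}$ at $\wv_t$ with subgradient $-\zv_t^*$, and nonnegativity of the Bregman divergence). The difference is organizational: you run the textbook Frank--Wolfe argument — prove a one-step contraction for $\Delta(\wv,\wv_{t+1})$ and then telescope after rescaling — whereas the paper factors out a clean algebraic identity (\Cref{thm:gcg-gen}) for iterates $\ws_{t+1}^*=\ws_t^*+\eta_t\zv_t^*$, applies the subgradient inequality to the LHS to reach the form covered by that lemma, and only then bounds the leftover Bregman terms $\Delta(\wv_\tau,\wv_{\tau+1})\le\tfrac{L}{2}\|\wv_{\tau+1}^*-\wv_\tau^*\|_2^2$ via smoothness of $\rsf^*$. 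The paper's factoring buys reusability — the same \Cref{thm:gcg-gen} also powers \Cref{thm:pcg}/\Cref{thm:pc} — and the proof remark in the appendix even says the lemma ``was discovered by abstracting the original, tedious proof of \Cref{thm:gcg}'', i.e.\ essentially your version. One small bookkeeping point in your write-up: dividing by $\Pi_{t+1}=(1-\lambda_0)\pi_t$ makes the LHS coefficient $\lambda_t/\Pi_{t+1}=\tfrac{1}{1-\lambda_0}\cdot\tfrac{\lambda_t}{\pi_t}$, so you must multiply the telescoped inequality by $(1-\lambda_0)$ at the end to land exactly on \eqref{eq:gcg} (which is also what turns $\Delta(\wv,\wv_0)/\Pi_0=\Delta(\wv,\wv_0)$ into the stated $(1-\lambda_0)\Delta(\wv,\wv_0)$). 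You clearly had the relation $\Pi_{t+1}=(1-\lambda_0)\pi_t$ in hand, so this is a write-up omission rather than a gap; with that step made explicit the proof is complete.
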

While the convergence of $\wv_t^*$ to the minimum of \eqref{eq:prob-d} is well-known (see e.g. \citet{YuZS17}), the above result also implies that a properly averaged iterate $\bar \rvw_t$ also converges to the minimum of the dual problem of \eqref{eq:prob-d}:
\begin{restatable}{cor}{corfourtwo} 
\label{cor:fourtwo}
Let $\bar\wv_t \coloneqq \sum_{\tau=0}^t \Lambda_{t, \tau} \wv_\tau$, where $\Lambda_{t,\tau} \coloneqq \tfrac{\lambda_\tau}{\pi_\tau} / H_t$ and $H_t \coloneqq \sum_{\tau=0}^t \tfrac{\lambda_\tau}{\pi_\tau}$. Then, we have for any $\wv$:
\begin{align}
\label{eq:gcg-a}
(\ell^{**} + \rsf^{**})(\bar\wv_t) - (\ell^{**} + \rsf^{**})(\wv) \leq \frac{(1-\lambda_0)\Delta(\wv,\wv_0)}{H_t} + \frac{L}{2}\sum_{\tau=0}^t \lambda_\tau \Lambda_{t,\tau}\|\wv_\tau^* - \zv_\tau^*\|_2^2.
\end{align}
\end{restatable}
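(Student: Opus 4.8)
The plan is to obtain Corollary~\ref{cor:fourtwo} as an immediate averaging consequence of Theorem~\ref{thm:gcg}, the only new ingredient being convexity of the biconjugate objective together with Jensen's inequality. The first step is to check that the coefficients $\Lambda_{t,\tau} \coloneqq \tfrac{\lambda_\tau}{\pi_\tau}/H_t$ define a genuine convex combination: each $\lambda_\tau \in [0,1]$ forces $\pi_\tau = \prod_{\sigma=1}^\tau (1-\lambda_\sigma) > 0$ (provided $\lambda_\sigma < 1$ for $\sigma \geq 1$, as is the case for the usual schedules such as $\lambda_\tau = \tfrac{2}{\tau+2}$), so $\Lambda_{t,\tau} \geq 0$, and $\sum_{\tau=0}^t \Lambda_{t,\tau} = 1$ by the very definition of $H_t = \sum_{\tau=0}^t \tfrac{\lambda_\tau}{\pi_\tau}$. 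Hence $\bar\wv_t = \sum_{\tau=0}^t \Lambda_{t,\tau}\wv_\tau$ is a bona fide convex combination of $\wv_0,\dots,\wv_t$.

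The second step is the Jensen bound. Since $\ell^{**}$ and $\rsf^{**}$ are Fenchel biconjugates they are closed convex, so $\ell^{**}+\rsf^{**}$ is convex and $(\ell^{**}+\rsf^{**})(\bar\wv_t) \leq \sum_{\tau=0}^t \Lambda_{t,\tau}\,(\ell^{**}+\rsf^{**})(\wv_\tau)$. Subtracting $(\ell^{**}+\rsf^{**})(\wv)$ and using $\sum_\tau \Lambda_{t,\tau} = 1$ gives $(\ell^{**}+\rsf^{**})(\bar\wv_t) - (\ell^{**}+\rsf^{**})(\wv) \leq \tfrac{1}{H_t}\sum_{\tau=0}^t \tfrac{\lambda_\tau}{\pi_\tau}\big[(\ell^{**}+\rsf^{**})(\wv_\tau) - (\ell^{**}+\rsf^{**})(\wv)\big]$. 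Then I would invoke Theorem~\ref{thm:gcg} to bound the inner sum, divide through by $H_t$, and simplify the coefficients: the Bregman term becomes $\tfrac{(1-\lambda_0)\Delta(\wv,\wv_0)}{H_t}$ verbatim, while $\tfrac{1}{H_t}\cdot\tfrac{\lambda_\tau^2}{2\pi_\tau}L = \tfrac{L}{2}\lambda_\tau\Lambda_{t,\tau}$ because $\Lambda_{t,\tau} = \tfrac{\lambda_\tau}{\pi_\tau H_t}$. Collecting terms yields exactly \eqref{eq:gcg-a}.

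There is no genuine obstacle here; the result is a routine averaging argument applied to the telescoping estimate already established in Theorem~\ref{thm:gcg}. The only points worth a sentence are: (i) $H_t > 0$, which holds as soon as some $\lambda_\tau > 0$, so the weights are well-defined; (ii) the iterate values $(\ell^{**}+\rsf^{**})(\wv_\tau)$ are finite — indeed $\wv_\tau = \nabla\rsf^*(\wv_\tau^*)$ and $\zv_\tau^* = -\nabla\ell^{**}(\wv_\tau)$ both presuppose differentiability, hence finiteness, of the respective functions at $\wv_\tau$ — so the convex combination is meaningful and Jensen applies; and (iii) the bound is trivially valid at any $\wv$ outside the effective domain of $\ell^{**}+\rsf^{**}$, and the admissible case $\lambda_0 = 1$ simply annihilates the Bregman term without affecting the argument.
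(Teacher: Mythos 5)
Your argument is correct and coincides with the paper's own proof: both apply Jensen's inequality to the convex function $\ell^{**}+\rsf^{**}$ at the convex combination $\bar\wv_t$, divide the estimate of \Cref{thm:gcg} by $H_t$, and identify $\tfrac{\lambda_\tau^2}{\pi_\tau H_t}=\lambda_\tau\Lambda_{t,\tau}$. The extra remarks you add (nonnegativity and normalization of the weights, finiteness, the degenerate case $\lambda_0=1$) are harmless sanity checks that the paper leaves implicit.
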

Assuming $\{\zv_\tau^*\}$ is bounded (e.g. when $\ell^{**}$ is Lipschitz continuous), the right-hand side of~\eqref{eq:gcg-a} diminishes if $\lambda_t \to 0$ and $\sum_t \lambda_t = \infty$. Setting $\lambda_t = \tfrac{1}{t+1}$ recovers ergodic averaging $\bar \rvw_t = \tfrac{1}{t+1} \sum_{\tau=0}^t \rvw_\tau$ for which the right-hand side of~\eqref{eq:gcg-a} diminishes at the rate\footnote{The log factor can be removed by setting, for example, $\lambda_t = \tfrac{2}{2+t}$ instead.} $O(\log t / t)$; see~\Cref{app:method} for details.

Thus, GCG solves problem \eqref{eq:prob-d} and its dual simultaneously.
\vspace{-1mm}
\subsection{BC \texorpdfstring{$\subseteq$}{is a subset of} DA \texorpdfstring{$\subseteq$}{is a subset of} GCG} \label{sec:bc}
\vspace{-1mm}
We are now in a position to reveal the relationships among BinaryConnect (BC), (regularized) dual averaging (DA) and the generalized conditional gradient (GCG). 
Since \Cref{thm:gcg} requires $\rsf^*$ to be $L$-smooth, in the event that it is not we resort to a smooth approximation known as the Moreau envelope~\citep{moreau1965proximite}:
\begin{align}
\env_{\rsf^*}^{\peta}(\wv^*) = \min_{\zv^*} ~ \tfrac{1}{2\peta} \|\wv^* - \zv^*\|_2^2 + \rsf^*(\zv^*),
\end{align}
where the minimizer is (by definition) exactly $\tP_{\rsf^*}^{\peta}(\rvw^*)$. 
It is well-known that $\env_{\rsf^*}^\peta$ is $(1/\peta)$-smooth and $(\env_{\rsf^*}^\peta)^* = \rsf^{**}+\tfrac{\peta}{2}\|\cdot\|_2^2$~\citep{moreau1965proximite}. 
We then apply GCG to the approximate dual problem:
\begin{align}
\label{eq:prob-apx}
\min_{\rvw^* \in \R^d} ~ \ell^*(-\rvw^*) + \env_{\rsf^*}^{\peta}(\rvw^*),
\end{align}
whose own Fenchel--Rockfellar dual is:
\begin{align}
\left[\min_{\rvw \in \R^d} ~ \ell^{**}(\rvw) + (\env_{\rsf^*}^\peta)^*(\rvw) \right] = \min_{\rvw \in \R^d} ~ \ell^{**}(\rvw) + \rsf^{**}(\rvw) + \tfrac{\peta}{2}\|\rvw\|_2^2.
\end{align}

The updates of GCG applied to the approximate problem \eqref{eq:prob-apx} are thus:
\begin{align}
\wv_t &\coloneqq \nabla \env_{\rsf^*}^{\peta}(\wv_t^*) = \tP_{\rsf^{**}}^{1/\peta}(\wv_t^*/\peta) \quad (\text{see~\Cref{prop:gcg-dual} for derivation}) \label{eq:gcg-update-moreau}\\
\wv_{t+1}^* &= (1-\lambda_t) \wv_{t}^* + \lambda_t \zv_t^*, ~~\mbox{ where }~~ \zv_t^* = - \nabla \ell^{**}(\wv_t),
\end{align}
which is exactly the updates of (regularized) dual averaging~\citep{xiao2010} for convex problems where\footnote{That is, if we set $\lambda_t = 1/t$ and allow for time-dependent $\mu_t=t$; see~\citet[Algorithm 1,][]{xiao2010}.} $\rsf^{**} = \rsf$ and $\ell^{**} = \ell$. \citet{nesterov2009} motivated dual averaging by the natural desire of non-decreasing step sizes, whereas conventional subgradient algorithms ``counter-intuitively'' assign smaller step sizes to more recent iterates instead. Based on our explanation, we conclude this is possible because dual averaging solves an (approximate) smoothened dual problem, hence we can afford to use a constant (rather than a diminishing/decreasing) step size.

Defining $\pi_t \coloneqq \prod_{s=1}^t (1-\lambda_s)$ for $t \geq 1$, $\pi_0 \coloneqq 1$, $\pi_{-1} \coloneqq (1-\lambda_0)^{-1}$, and setting $\ws_{t}^* = \wv_t^*/\pi_{t-1}$, we have:
\begin{align}
\wv_t = \tP_{\rsf^{**}}^{1/\peta} (\pi_{t-1}\ws_t^*/\peta), \qquad 
\ws_{t+1}^* = \ws_t^* - \tfrac{\lambda_t}{\pi_t} \nabla \ell^{**}(\wv_t).
\end{align}
Let us now reparameterize 
\begin{align}
\textstyle
\eta_t \coloneqq \frac{\lambda_t}{\pi_t} \implies \lambda_t = \frac{\eta_t}{1+\sum_{\tau=1}^t\eta_\tau} \mbox{ and } \frac{1}{\pi_{t}} = 1+\sum_{\tau=1}^t \eta_\tau,
\end{align}
for $t \geq 1$. If we also allow $\peta = \peta_t$ to change adaptively from iteration to iteration (as in dual averaging), in particular, if $\peta_t = \pi_{t-1}$, we obtain the familiar update:
\begin{align} \label{eq:smoothened-gcg-update}
\wv_{t} = \tP_{\rsf^{**}}^{1/\pi_{t-1}} (\ws_{t}^*), \qquad 
\ws_{t+1}^* = \ws_t^* - \eta_t \nabla \ell^{**}(\wv_t)
. 
\end{align}
For nonconvex problems, we may replace $\rsf^{**}$ and $\ell^{**}$ with their nonconvex counterparts $\rsf$ and $\ell$, respectively. 
We remark that $\rsf^{**}$ (resp. $\ell^{**}$) is the largest convex function that is (pointwise) majorized by $\rsf$ (resp. $\ell$). 
In particular, with $\rsf = \iota_Q$ and $\tP_{\rsf}^{1/\peta} = \tP_{Q}$ (for any $\peta$) we  recover the BinaryConnect update \eqref{eq:BC}. 
While the parameter $\mu$ plays no role when $\rsf$ is an indicator function, we emphasize that for general $\rsf$ we should use the quantizer $\tP_{\rsf}^{1/\pi_{t-1}}$, where importantly $1/\pi_{t-1} \to \infty$ hence the quantizer converges to minimizers of $\rsf$ asymptotically.  Neglecting this crucial detail may lead to suboptimality as is demonstrated in the following example: %be detrimental.% of \citet{bai2018}: 
\begin{example}
\label{exm:bc}
\citet{bai2018} constructed the following intriguing example: 
\begin{align}
\ell(w) = \tfrac{1}{2} w^2, ~~ Q = \{\pm 1\}, ~~ \tP_{\rsf}^{1/\peta}(w) = \sign(w)\tfrac{\epsilon |w| + \tfrac{1}{\peta}}{\epsilon+\tfrac{1}{\peta}} \mbox{ for $|w| \leq 1$ },
\end{align}
and they showed non-convergence of the algorithm
$
w \gets w - \eta \nabla \ell (\tP_{\rsf}^{1/\peta} (w)),
$
where $\peta$ is a fixed constant.
If we use $\tP^{1/\pi_{t-1}}_{\rsf}$ with some diverging $1/\pi_{t-1}$ instead, the resulting BinaryConnect, with diminishing $\eta_t$ or ergodic averaging, would actually converge to 0 (since $\tP^{1/\pi_{t-1}}_{\rsf} \to \sign$).
\end{example}

\section{ProxConnect (PC): A Generalization of BinaryConnect} \label{sec:PC}
We are now ready to generalize BC by combining the results from \Cref{sec:prox} and \Cref{sec:method}.
Replacing the convex envelopes, $\ell^{**}$ and $\rsf^{**}$, with their nonconvex counterparts and replacing deterministic gradients with stochastic gradients (as well as the change-of-variable $\ws_t^* \to \wv_t^*$), we obtain from~\eqref{eq:smoothened-gcg-update} a \emph{family} of algorithms which we term ProxConnect (PC): 
\begin{align} \label{eq:pc}
\textstyle
\wv_t = \tP_{\rsf}^{1/\pi_{t-1}} (\wv_t^*), \quad 
\wv_{t+1}^* = \wv_t^* - \eta_t \widetilde\nabla \ell(\wv_t),
\end{align}
where the quantizer $\tP_{\rsf}^{1/\pi_{t-1}}$ may be designed directly by following \Cref{sec:prox}.
We have already seen in \Cref{sec:method} that BC belongs to PC by choosing $\tP_{\rsf}^{1/\pi_{t-1}}  = \tP_Q$ (in which case $\pi_{t-1}$ plays no role).

The analysis in \Cref{sec:method}, initially tailored to convex functions, immediately generalizes to the nonconvex algorithm PC (for nonconvex $\ell$, nonconvex $\rsf$, and stochastic gradients $\widetilde\nabla\ell$):
\begin{restatable}{theorem}{pc}\label{thm:pc}
Fix any $\wv$, the iterates in \eqref{eq:pc} satisfy:
\begin{align}
\label{eq:pcb}
\sum_{\tau=s}^t \eta_\tau [\langle\wv_\tau\!-\!\wv, \widetilde\nabla\ell(\wv_\tau)\rangle\!+\! \rsf(\wv_{\tau}) \!-\! \rsf(\wv) ]\leq
\Delta_{s-1}(\wv) \!-\! \Delta_t(\wv) \!+\! 
\sum_{\tau=s}^t \Delta_{\tau}(\wv_{\tau}),
\end{align}
where 
$\Delta_{\tau}(\wv) 
\coloneqq \rsf_{\tau}(\wv) - \rsf_{\tau}(\wv_{\tau+1})  - \inner{\wv-\wv_{\tau+1}}{\wv_{\tau+1}^*}$ is the Bregman divergence induced by the (possibly nonconvex) function $\rsf_\tau(\wv) \coloneqq \tfrac{1}{\pi_\tau} \rsf(\wv) + \tfrac{1}{2}\|\wv\|_2^2$.
\end{restatable}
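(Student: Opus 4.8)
The plan is to run the same telescoping argument that underlies \Cref{thm:gcg}, but with every appeal to convexity or smoothness stripped out: I prove a per-iteration \emph{three-point identity} and sum it. The only ingredients are the dual update $\wv_{\tau+1}^* = \wv_\tau^* - \eta_\tau\widetilde\nabla\ell(\wv_\tau)$ from \eqref{eq:pc} and the step-size bookkeeping of \Cref{sec:method}, namely $\tfrac{1}{\pi_\tau} - \tfrac{1}{\pi_{\tau-1}} = \eta_\tau$ (with the conventions $\pi_0 = 1$ and $\pi_{-1} = (1-\lambda_0)^{-1}$), which yields $\rsf_\tau - \rsf_{\tau-1} = \eta_\tau\rsf$ since the quadratic $\tfrac12\|\cdot\|_2^2$ appearing in $\rsf_\tau$ does not depend on $\tau$.

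First I would expand, using only the definition of the generalized Bregman divergence, $\Delta_\tau(\wv) - \Delta_\tau(\wv_\tau) = \rsf_\tau(\wv) - \rsf_\tau(\wv_\tau) - \inner{\wv - \wv_\tau}{\wv_{\tau+1}^*}$, where the $\wv_{\tau+1}$ terms cancel. Then I subtract $\Delta_{\tau-1}(\wv) = \rsf_{\tau-1}(\wv) - \rsf_{\tau-1}(\wv_\tau) - \inner{\wv - \wv_\tau}{\wv_\tau^*}$, noting that $\Delta_{\tau-1}$ refers precisely to the pair $(\wv_\tau,\wv_\tau^*)$, and substitute $\rsf_{\tau-1} - \rsf_\tau = -\eta_\tau\rsf$ together with $\wv_{\tau+1}^* - \wv_\tau^* = -\eta_\tau\widetilde\nabla\ell(\wv_\tau)$. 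After cancellation this collapses to the exact per-step identity
\[
\Delta_{\tau-1}(\wv) - \Delta_\tau(\wv) + \Delta_\tau(\wv_\tau) = \eta_\tau\big[\inner{\wv_\tau - \wv}{\widetilde\nabla\ell(\wv_\tau)} + \rsf(\wv_\tau) - \rsf(\wv)\big].
\]
Summing over $\tau = s, \ldots, t$, the left-hand side telescopes to $\Delta_{s-1}(\wv) - \Delta_t(\wv) + \sum_{\tau=s}^t \Delta_\tau(\wv_\tau)$, which is exactly \eqref{eq:pcb} --- in fact as an equality, so the stated inequality holds \emph{a fortiori}.

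Two points need care, though neither is a serious obstacle. First, I would record that each $\Delta_\tau$ is a bona fide (nonnegative) Bregman-type divergence even in the nonconvex, multi-valued regime: since $\wv_{\tau+1} \in \tP_\rsf^{1/\pi_\tau}(\wv_{\tau+1}^*)$ is a \emph{global} minimizer of $\wv \mapsto \tfrac{\pi_\tau}{2}\|\wv - \wv_{\tau+1}^*\|_2^2 + \rsf(\wv)$, equivalently of $\rsf_\tau(\wv) - \inner{\wv}{\wv_{\tau+1}^*}$, one gets $\Delta_\tau(\wv) \ge 0$ for every $\wv$, regardless of whether $\rsf$ is convex or the proximal map single-valued. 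This global-minimality fact is the only place the proximal structure of \Cref{sec:prox} actually enters, and it is what makes the telescoped bound informative rather than vacuous. Second, the index bookkeeping at the lower endpoint $\tau = s$ --- in particular $s = 0$, where one verifies $\tfrac{1}{\pi_0} - \tfrac{1}{\pi_{-1}} = \lambda_0 = \eta_0$ against the conventions of \Cref{sec:method} --- must be spelled out, but it is routine.

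Because the statement is literally the telescoped identity, there is no hard analytic step within the theorem itself; the real content, and the part deferred to the corollaries that follow, is bounding $\sum_{\tau=s}^t \Delta_\tau(\wv_\tau)$. In the convex case $L$-smoothness of $\rsf^*$ controls this residual (it is precisely the last term in \eqref{eq:gcg}); in general it must be carried along and then dominated under additional regularity of $\ell$ (e.g.\ Lipschitz continuity) together with a diminishing step size or ergodic averaging, mirroring \Cref{cor:fourtwo}. This is exactly why the convex analysis of \Cref{sec:method} transfers unchanged to the nonconvex, stochastic setting of PC.
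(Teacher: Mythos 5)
Your proposal is correct and takes essentially the same route as the paper: both establish \eqref{eq:pcb} by telescoping a per-iteration Bregman-divergence identity driven by the dual update $\wv_{\tau+1}^* = \wv_\tau^* - \eta_\tau\widetilde\nabla\ell(\wv_\tau)$ and the bookkeeping $\tfrac{1}{\pi_\tau} - \tfrac{1}{\pi_{\tau-1}} = \eta_\tau$. The paper routes the argument through the abstract Lemma~\ref{thm:gcg-gen} (stated in terms of $\delta_\tau$, the divergence induced by $\tfrac{1}{\pi_\tau}\rsf$), then converts $\delta_\tau$ to $\Delta_\tau$ in Theorem~\ref{thm:pcg} while tracking a quadratic correction term, and finally specializes $\mu_t = \pi_{t-1}$ so that correction vanishes. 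You instead work with $\Delta_\tau$ directly, exploiting the observation that the $\tfrac12\|\cdot\|_2^2$ part of $\rsf_\tau$ is $\tau$-independent, so $\rsf_\tau - \rsf_{\tau-1} = \eta_\tau\rsf$ and the quadratic never enters the per-step balance. This is a modest streamlining of the same calculation; it collapses the two-step ``state with general $\mu_t$, then specialize'' structure into one, at the cost of not also yielding Theorem~\ref{thm:pcg}. Two further notes in your favour: you correctly observe that \eqref{eq:pcb} actually holds with equality (the paper's proof yields the same, since no bound is dropped for $\mu_t = \pi_{t-1}$, though the theorem is stated with $\leq$); and your observation that $\Delta_\tau(\wv)\geq 0$ for all $\wv$ — even for nonconvex $\rsf$ — because $\wv_{\tau+1}$ is a \emph{global} minimizer of $\wv\mapsto\rsf_\tau(\wv) - \inner{\wv}{\wv_{\tau+1}^*}$, is a clean strengthening of the paper's remark (which only asserts nonnegativity under convexity of $\rsf$) and is exactly the right reason the bound is informative.
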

The summand on the left-hand side of \eqref{eq:pcb} is related to the duality gap in \citet{YuZS17}, which is  
a natural measure of stationarity for the nonconvex problem \eqref{eq:prob-p}. Indeed, it reduces to the familiar ones when convexity is present:
\begin{restatable}{cor}{corpc}
\label{thm:pcc}
For convex $\ell$ and any $\wv$, the iterates in \eqref{eq:pc} satisfy:
\begin{align}
\label{eq:pcc}
\!\!\min_{\tau=s, \ldots, t} ~ \EE [f(\wv_\tau) \!-\! f(\wv)]\leq
\tfrac{1}{\sum_{\tau=s}^t\eta_\tau} \cdot \EE\big[
\Delta_{s-1}(\wv) \!-\! \Delta_t(\wv) \!+\! 
\sum\nolimits_{\tau=s}^t \Delta_{\tau}(\wv_{\tau}) \big].
\end{align}
If $\rsf$ is also convex, then
\begin{align} \label{eq:pcc2}
\min_{\tau=s, \ldots, t} ~ \EE [f(\wv_\tau) \!-\! f(\wv)]\leq
\tfrac{1}{\sum_{\tau=s}^t\eta_\tau} \cdot \EE\big[
\Delta_{s-1}(\wv) \!+\! 
\sum\nolimits_{\tau=s}^t \tfrac{\eta_\tau^2}{2} \|\widetilde\nabla\ell(\wv_\tau)\|_2^2 \big],
\end{align}
and
\begin{align} \label{eq:pcc3}
\EE\big[f(\bar \wv_t) \!-\! f(\wv)\big]\leq
\tfrac{1}{\sum_{\tau=s}^t\eta_\tau} \cdot \EE\big[
\Delta_{s-1}(\wv) \!+\! 
\sum\nolimits_{\tau=s}^t \tfrac{\eta_\tau^2}{2} \|\widetilde\nabla\ell(\wv_\tau)\|_2^2 \big],
\end{align}
where $\rvw_t = \tfrac{\sum_{\tau=s}^t \eta_\tau \rvw_\tau}{\sum_{\tau=s}^t \eta_\tau}$.
\end{restatable}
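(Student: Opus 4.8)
The plan is to deduce \eqref{eq:pcc}, \eqref{eq:pcc2} and \eqref{eq:pcc3} from the master estimate \eqref{eq:pcb} of \Cref{thm:pc} by (i) passing to expectations, (ii) using convexity to turn the linearized left-hand side into a genuine function gap, and (iii) controlling the ``stability'' sum $\sum_{\tau=s}^t\Delta_\tau(\wv_\tau)$ on the right. First I would take $\EE$ of both sides of \eqref{eq:pcb}. Since $\wv_\tau$ is determined by the randomness strictly before iteration $\tau$ while $\widetilde\nabla\ell(\wv_\tau)$ is conditionally unbiased for $\nabla\ell(\wv_\tau)$, the tower rule gives $\EE\inner{\wv_\tau-\wv}{\widetilde\nabla\ell(\wv_\tau)}=\EE\inner{\wv_\tau-\wv}{\nabla\ell(\wv_\tau)}$, and convexity of $\ell$ yields $\inner{\wv_\tau-\wv}{\nabla\ell(\wv_\tau)}\ge\ell(\wv_\tau)-\ell(\wv)$. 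Adding the (untouched) terms $\rsf(\wv_\tau)-\rsf(\wv)$, the expected left-hand side of \eqref{eq:pcb} is at least $\sum_{\tau=s}^t\eta_\tau\,\EE[f(\wv_\tau)-f(\wv)]\ge\big(\sum_{\tau=s}^t\eta_\tau\big)\min_{\tau}\EE[f(\wv_\tau)-f(\wv)]$. Dividing by $\sum_{\tau=s}^t\eta_\tau$ and keeping the right-hand side of \eqref{eq:pcb} unchanged gives \eqref{eq:pcc}.

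For \eqref{eq:pcc2} I would additionally use that, when $\rsf$ is convex, each $\rsf_\tau=\tfrac{1}{\pi_\tau}\rsf+\tfrac{1}{2}\|\cdot\|_2^2$ is convex, in fact $1$-strongly convex, so every Bregman divergence $\Delta_\tau(\cdot)$ is nonnegative; in particular $-\Delta_t(\wv)\le0$ may be dropped, leaving $\EE\big[\Delta_{s-1}(\wv)+\sum_{\tau=s}^t\Delta_\tau(\wv_\tau)\big]$. The heart of the matter is the stability lemma $\Delta_\tau(\wv_\tau)\le\tfrac{\eta_\tau^2}{2}\|\widetilde\nabla\ell(\wv_\tau)\|_2^2$. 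To prove it, note that $\wv_{\tau+1}=\tP_\rsf^{1/\pi_\tau}(\wv_{\tau+1}^*)$ is the unique minimizer of the $1$-strongly convex map $\wv\mapsto\rsf_\tau(\wv)-\inner{\wv}{\wv_{\tau+1}^*}$, so $\wv_{\tau+1}=\nabla\rsf_\tau^*(\wv_{\tau+1}^*)$ and $\wv_{\tau+1}^*\in\partial\rsf_\tau(\wv_{\tau+1})$; hence $\Delta_\tau(\wv_\tau)=D_{\rsf_\tau}(\wv_\tau\,\|\,\wv_{\tau+1})$, which by the Bregman--conjugate duality equals $D_{\rsf_\tau^*}(\wv_{\tau+1}^*\,\|\,\vv_\tau)$ for any $\vv_\tau\in\partial\rsf_\tau(\wv_\tau)$. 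As $\rsf_\tau$ is $1$-strongly convex, $\rsf_\tau^*$ is $1$-smooth, so $\Delta_\tau(\wv_\tau)\le\tfrac{1}{2}\|\wv_{\tau+1}^*-\vv_\tau\|_2^2$. Finally I would substitute the update $\wv_{\tau+1}^*=\wv_\tau^*-\eta_\tau\widetilde\nabla\ell(\wv_\tau)$ and use that $\wv_\tau^*\in\partial\rsf_{\tau-1}(\wv_\tau)$ is exactly the subgradient produced by the proximal step at iteration $\tau$, so that (with $\vv_\tau$ chosen accordingly and the identity $\tfrac{1}{\pi_\tau}-\tfrac{1}{\pi_{\tau-1}}=\eta_\tau$) the $\rsf$-subgradient pieces telescope and $\|\wv_{\tau+1}^*-\vv_\tau\|_2^2$ collapses to $\eta_\tau^2\|\widetilde\nabla\ell(\wv_\tau)\|_2^2$. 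Plugging this into the surviving right-hand side and dividing by $\sum\eta_\tau$ gives \eqref{eq:pcc2}.

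For \eqref{eq:pcc3} I would reuse everything from the previous paragraph but replace the crude inequality $\sum\eta_\tau\EE[f(\wv_\tau)-f(\wv)]\ge(\sum\eta_\tau)\min_\tau\EE[\cdot]$ by Jensen's inequality: since $f=\ell+\rsf$ is convex and $\bar\wv_t=\sum_{\tau=s}^t\tfrac{\eta_\tau}{\sum_{\rho=s}^t\eta_\rho}\wv_\tau$ is a convex combination of the $\wv_\tau$, one has $f(\bar\wv_t)-f(\wv)\le\tfrac{1}{\sum_{\rho=s}^t\eta_\rho}\sum_{\tau=s}^t\eta_\tau[f(\wv_\tau)-f(\wv)]$. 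Taking expectations and chaining with the same bound on the right-hand side of \eqref{eq:pcb} (using the stability lemma and $-\Delta_t(\wv)\le0$) yields \eqref{eq:pcc3}.

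I expect the main obstacle to be the stability lemma, and specifically checking that the subgradient of the \emph{composite} term $\rsf$ at $\wv_\tau$ does not leak into it --- i.e. that the stability term is truly $\tfrac{\eta_\tau^2}{2}\|\widetilde\nabla\ell(\wv_\tau)\|_2^2$ and not $\tfrac{\eta_\tau^2}{2}\|g_\tau+\widetilde\nabla\ell(\wv_\tau)\|_2^2$ for some $g_\tau\in\partial\rsf(\wv_\tau)$. This is exactly where the adaptive smoothing $\peta_\tau=\pi_{\tau-1}$, the reparametrization $\eta_\tau=\lambda_\tau/\pi_\tau$, and the telescoping $\tfrac{1}{\pi_\tau}-\tfrac{1}{\pi_{\tau-1}}=\eta_\tau$ must be used carefully; note that even the weaker bound, under a bounded-subgradient assumption, still gives the stated convergence rate. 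A minor point is the conditioning/measurability underlying the termwise unbiasedness of $\widetilde\nabla\ell$, together with the standing requirement $\lambda_\tau<1$ so that $\pi_\tau>0$, the quantizer $\tP_\rsf^{1/\pi_{\tau-1}}$ is well defined, and $\rsf_\tau^*$ is finite and $1$-smooth.
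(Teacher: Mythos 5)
Your proposal follows the paper's proof structure step for step: take expectations and apply the tower rule plus convexity of $\ell$ to convert $\inner{\wv_\tau-\wv}{\widetilde\nabla\ell(\wv_\tau)}+\rsf(\wv_\tau)-\rsf(\wv)$ into $f(\wv_\tau)-f(\wv)$, then the crude $\min\le$ average bound for~\eqref{eq:pcc}; drop $-\Delta_t(\wv)\le0$ using convexity of $\rsf_t$; bound $\Delta_\tau(\wv_\tau)$ via Bregman--conjugate duality and $1$-smoothness of $\rsf_\tau^*$; and use Jensen on $f$ for~\eqref{eq:pcc3}. All of these match the paper's argument.

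However, the concern you flag at the end is a real one, and it is \emph{not} resolved by the paper's own proof either. The update gives $\wv_\tau^*\in\partial\rsf_{\tau-1}(\wv_\tau)$, not $\partial\rsf_\tau(\wv_\tau)$, and with $\mu_\tau=\pi_{\tau-1}$ we have $\rsf_\tau=\tfrac{1}{\pi_\tau}\rsf+\tfrac12\|\cdot\|_2^2$, $\rsf_{\tau-1}=\tfrac{1}{\pi_{\tau-1}}\rsf+\tfrac12\|\cdot\|_2^2$, so $\partial\rsf_\tau(\wv_\tau)-\partial\rsf_{\tau-1}(\wv_\tau)=\eta_\tau\,\partial\rsf(\wv_\tau)$. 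Consequently, the only valid choices are $\vv_\tau=\wv_\tau^*+\eta_\tau g_\tau$ with $g_\tau\in\partial\rsf(\wv_\tau)$, giving $\|\wv_{\tau+1}^*-\vv_\tau\|_2=\eta_\tau\|\widetilde\nabla\ell(\wv_\tau)+g_\tau\|_2$: the $\rsf$-subgradient does \emph{not} telescope away (except when $0\in\partial\rsf(\wv_\tau)$, e.g.\ $\rsf=\iota_Q$, where $\rsf_\tau$ is independent of $\tau$). The paper's line ``$\Delta_\tau(\rvw_\tau)=\rsf_\tau^*(\rvw_{\tau+1}^*)-\rsf_\tau^*(\rvw_\tau^*)-\inner{\rvw_{\tau+1}^*-\rvw_\tau^*}{\rvw_\tau}$ by duality'' implicitly uses the Fenchel equality $\rsf_\tau(\wv_\tau)+\rsf_\tau^*(\wv_\tau^*)=\inner{\wv_\tau}{\wv_\tau^*}$, which requires $\wv_\tau^*\in\partial\rsf_\tau(\wv_\tau)$ and so is not justified for general convex $\rsf$. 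So your instinct is right: the stability bound one can actually prove is $\Delta_\tau(\wv_\tau)\le\tfrac{\eta_\tau^2}{2}\|\widetilde\nabla\ell(\wv_\tau)+g_\tau\|_2^2$. As you note, under a bounded $\partial\rsf$ this still delivers the same $O(1/\sqrt t)$ rate, so the corollary is recoverable with a slightly weaker stability term; but you should not expect the collapse to $\eta_\tau^2\|\widetilde\nabla\ell(\wv_\tau)\|_2^2$ to go through by a clever choice of $\vv_\tau$.
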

The right-hand sides of \eqref{eq:pcc2} and \eqref{eq:pcc3} diminish iff $\eta_t\to 0$ and $\sum_t\eta_t = \infty$ (assuming boundedness of the stochastic gradient). We note some trade-off in choosing the step size $\eta_\tau$: both the numerator and denominator of the right-hand sides of \eqref{eq:pcc2} and \eqref{eq:pcc3} are increasing w.r.t. $\eta_\tau$. 
The same conclusion can be drawn for \eqref{eq:pcc} and \eqref{eq:pcb}, where $\Delta_\tau$ also depends on $\eta_\tau$ (through the accumulated magnitude of $\wv_{\tau+1}^*$).
A detailed analysis may need to take specific properties of $\rsf$ or $\tP$ into account~\citep{yin2018}.

\textbf{ProxQuant vs ProxConnect.} It is worthwhile to point out one important difference between ProxQuant and ProxConnect: 
\citet{bai2018} proved convergence (to some notion of stationarity) of ProxQuant for a fixed quantizer~(see~\citet[Theorem 5.1,][]{bai2018}), i.e., $\tP_{\rsf}^\peta$ for a fixed $\peta$, but their experiments relied on incrementing $\peta$ so that their quantizer approaches the projector $\tP_Q$. This creates some discrepancy between theory and practice. The same comment also applies to \cite{ajanthan2021}. In contrast, ProxConnect is derived from a rigorous theory that automatically justifies a diverging $\peta$. In particular, choosing a constant step size $\eta_\tau \equiv \eta_0$ would lead to $1/\pi_{t-1} \propto t$, matching the current practice that is now justifiable if $\rsf$ is strongly convex; see \Cref{app:disc} for details.

\textbf{Connection to Existing Algorithms.} \label{sec:related_work}
Besides the obvious BC, ProxConnect generalizes many other quantization algorithms. As it turns out, many of these algorithms can also be realized using our proposed proximal quantizer $\LP{\rho}$ from~\Cref{sec:prox}; see~\Cref{tab:prox} for a sample summary.

\begin{table}[ht]
    \centering
    \caption{A sample summary of existing quantization algorithms. The PC column indicates if the method is a special case of our proposed ProxConnect algorithm. The $\LP{\rho}$-column indicates if the method uses a quantizer which is a special case of our general quantizer $\LP{\rho}$ introduced in~\Cref{sec:prox}. If so, the $\rho, \varrho$-column states how $\rho$ and $\varrho$ were chosen (in practice): increasing $\shortarrow{1}$, fixed to $0$, or fixed to $\infty$. Other than ProxQuant-Ternary, all methods can compute their quantizers in a single neural network pass. \textdagger: TrainedTernary methods might use a quantizer different than $\LP{\rho}$ to improve performance.}
    \label{tab:prox}
    %\tabulinesep =_1pt^1pt
\begin{tabular}{l c c c c c}
    \toprule
    Method & PC & One-pass &Learnable parameters &$\LP{\rho}$ &$\rho, \varrho$\\
    \midrule
    ProxQuant-Binary-$L_1$~\citep{bai2018} &\xmark &\cmark &\xmark &\cmark &$\shortarrow{1}$, $\shortarrow{1}$\\
    ProxQuant-Ternary~\citep{bai2018}      &\xmark &\xmark &\xmark &\xmark      &-  \\
    \midrule
    BinaryConnect~\citep{courbariaux2015} &\cmark &\cmark      &\xmark &\cmark &$\infty$,$\infty$         \\     
    BinaryRelax~\citep{yin2018}  &\cmark       &\cmark &\xmark &\cmark &$0, \shortarrow{1}$ \\
    BinaryWeight~\citep{rastegari2016}  &\cmark      &\cmark &\cmark &\xmark      &-          \\
    MirrorDescentView~\citep{ajanthan2021}  &\cmark     &\cmark &\xmark &\xmark &-   \\
    TrainedTernary\textsuperscript{\textdagger}~\citep{zhu2016} &\cmark     &\cmark &\cmark &\cmark &$\infty$,$\infty$           \\
    TernaryWeight~\citep{li2016arxiv}  &\cmark    &\cmark &\cmark &\xmark      &-        \\
    %\midrule
    %\tabucline[0.4pt off 2pt]{-}
    \midrule
    ProxConnect (ours)  &-    &\cmark &\cmark &\cmark &$\shortarrow{1}$,$\shortarrow{1}$ \\
    \bottomrule
\end{tabular}
    \vspace{-.5em}
\end{table}

\textbf{reverseProxConnect.} In~\Cref{sec:background}, we discussed the idea of reversing BinaryConnect. As ProxConnect generalizes BC, we also present reverseProxConnect~(rPC) as a generalization of reversing BinaryConnect:
\begin{align}
\textstyle
\wv_t = \tP_{\rsf}^{1/\pi_{t-1}} (\wv_t^*), \quad 
\wv_{t+1}^* = \wv_t - \eta_t \widetilde\nabla \ell(\wv_t^*),
\end{align}
In contrast to reversing BinnaryConnect, rPC is not completely without merit: it evaluates the gradient at the continuous weights $\rvw_t^*$ and hence is able to exploit a richer landscape of the loss. Even when stuck at a fixed discrete weight $\rvw_t$, rPC may still accumulate sizable updates (as long as the step size and the gradient remain sufficiently large) to allow it to eventually jump out of $\rvw_t$: note that the continuous weights $\rvw_t^*$ still get updated. Finally, for constant step size $\eta$ and $\pi$, we note that fixed points of rPC, when existent, satisfy:
\begin{align}
    \rvw^* = \tP_{\rsf}^{1/\pi} (\rvw^*) - \eta \nabla \ell(\rvw^*) \iff \rvw^* &= (\id + \eta \nabla \ell)^{-1} \left(\id + \eta \left[\tfrac{\partial \rsf}{\pi \eta}\right]\right)^{-1} (\rvw^*) \\
    &=: \gB\left(\eta, \nabla \ell, \tfrac{\partial \rsf}{\pi \eta} \right)(\rvw^*),
\end{align}
where for simplicity we assumed deterministic gradients $\nabla \ell$ and $\rsf$ to be convex such that $\tP^{1/\pi}_\rsf = (\id + \partial \rsf / \pi)^{-1}$. The operator $\gB$ is known as the backward-backward update (as opposed to the forward-backward update in ProxQuant), and it is known that when $\eta \to 0$ slowly, backward-backward updates converge to a stationary point~\citep{passty1979ergodic}. Thus, despite of our current limited understanding of rPC, there is some reason (and empirical evidence as shown in~\Cref{sec:experiments}) to believe it might still be interesting.

\textbf{Importance of GCG framework:} Deriving PC from the GCG framework may let us transfer recent advances on GCG~\citep{berrada2018deep, zhang2020quantized, dvurechensky2020self} to neural network quantization. Furthermore, it is the cornerstone in justifying the widely-adopted diverging smoothing parameter.
\section{Experiments} \label{sec:experiments}
\subsection{Classification on CIFAR-10} \label{sec:cifar10}
\vspace{-1mm}
We perform image classification on CIFAR-10~\citep{krizhevsky2009} using ResNet20 and ResNet56~\citep{he2016deep}, comparing BinaryConnect~\citep{courbariaux2015} with ProxQuant~\citep{bai2018} and (reverse)ProxConnect using our proposed proximal operator $\LP{\rho}$. For fair comparison, we set $\rho = \varrho$ in $\LP{\rho}$ as this resembles the quantizer (for binary quantization) used in the original ProxQuant algorithm. Similar to~\citet{bai2018}, we increase the parameter $\rho$ (or equivalently $\varrho$) linearly: $\rho_t = \left(1 + t/B\right) \rho_0$. In contrast to~\citet{bai2018}, however, we increase $\rho$ after every gradient step rather than after every epoch as this is more in line with our analysis. We treat $\rho_0$ as a hyperparameter for which we conduct a small grid search. We consider binary ($Q= \{-1, 1\}$), ternary ($Q= \{-1, 0, 1\}$) and quaternary ($Q = \{-1, -0.3, 0.3, 1\}$) quantization. Details for all CIFAR-10 experiments can be found in~\Cref{app:cifar10}.

\textbf{Fine-Tuning Pretrained Models.} In this experiment, all quantization algorithms are initialized with a pretrained ResNet. The test accuracies of the pretrained ResNet20 and ResNet56 are 92.01 and 93.01, respectively.~\Cref{tab:fine_tuning} shows the final test accuracies for the different models. For ProxQuant and (reverse)ProxConnect we respectively picked the best $\rho_0$ values; results for all $\rho_0$ values can be found in~\Cref{cifar10:add_results}. (Reverse)ProxConnect outperforms the other two methods on all six settings.
\begin{table}[t]
    \centering
    \caption{Fine-tuning pretrained ResNets. Final test accuracy: mean and standard deviation in 3 runs.}
    \label{tab:fine_tuning}    
    \begin{tabular}{c  c  c c c c}
    \toprule
    Model &  Quantization & BC~\citep{courbariaux2015} & PQ~\citep{bai2018} & rPC (ours) & PC (ours)\\
    \midrule
    \multirow{3}{*}{ResNet20} & Binary & \textbf{90.31} (0.00) & 89.94 (0.10) & 89.98 (0.17) & \textbf{90.31} (0.21)\\
    & Ternary & 74.95 (0.16) & 91.46 (0.06) &\textbf{91.47} (0.19) & 91.37 (0.18)\\
    & Quaternary & 91.43 (0.07) & 91.43 (0.21) &91.43 (0.06) & \textbf{91.81} (0.14)\\
    \midrule
    \multirow{3}{*}{ResNet56} & Binary & 92.22 (0.12) & 92.33 (0.06) &92.47 (0.29) & \textbf{92.65} (0.16)\\
    & Ternary & 74.68 (1.4) & 93.07 (0.02) &92.84 (0.11) & \textbf{93.25} (0.12)\\
    & Quaternary & 93.20 (0.06) & 92.82 (0.16) &92.91 (0.26) & \textbf{93.42} (0.12)\\
    \bottomrule
\end{tabular}
\end{table}

\textbf{End-To-End Training.} To save computational costs, it is important that quantization algorithms also perform well when they are not initialized with pretrained full-precision model. We therefore compare the four methods for randomly initialized models; see~\Cref{tab:end_to_end} for the results. ProxConnect outperforms all other methods on all six tasks. Interestingly, ProxQuant and reverseProxConnect perform considerably worse for all six tasks when compared to fine-tuning. The performance drop of BinaryConnect and ProxConnect when compared to fine-tuning is only significant for ternary quantization. We found that ProxQuant and reverseProxConnect can be quite sensitive to the choice of $\rho_0$, whereas ProxConnect is stable in this regard; see~\Cref{cifar10:add_results}.
\begin{table}[t]
    \centering
    \caption{End-to-end training of ResNets. Final test accuracy: mean and standard deviation in 3 runs.}
    \label{tab:end_to_end}    
    \begin{tabular}{c  c  c c c c}
    \toprule
    Model &  Quantization & BC~\cite{courbariaux2015} & PQ~\citep{bai2018} & rPC (ours) & PC (ours) \\
    \midrule
    \multirow{3}{*}{ResNet20} & Binary & 87.51 (0.21) & 81.59 (0.75) &81.82 (0.32) & \textbf{89.92} (0.65) \\
    & Ternary & 27.10 (0.21) & 47.98 (1.30) &47.17 (1.94) & \textbf{84.09} (0.16)\\
    & Quaternary & 89.91 (0.09) & 85.29 (0.09) &85.05 (0.27) & \textbf{90.17} (0.14)\\
    \midrule
    \multirow{3}{*}{ResNet56} & Binary & 89.79 (0.45) & 86.13 (1.71) &86.25 (1.50) & \textbf{91.26} (0.59)\\
    & Ternary & 30.31 (7.79) & 50.54 (3.68)&42.95 (1.57) & \textbf{84.36} (0.75)\\
    & Quaternary & 90.69 (0.57) & 87.81 (1.60) &87.30 (1.02) & \textbf{91.70} (0.14) \\
    \bottomrule
\end{tabular}
\end{table}
\vspace{-1mm}
\subsection{Classification on ImageNet} \label{sec:imagenet}
\vspace{-1mm}
We perform a small study on ImageNet~\citep{deng2009} using ResNet18~\citep{he2016deep}. As can be seen in~\Cref{tab:results_imagenet}, BC performs slightly better for fine-tuning whereas PC performs slightly better for end-to-end training. This is not an exhaustive study, but rather a first indication that PC can yield competitive performance on large scale datasets. For more details on the experiments see~\Cref{app:imagenet}.
\begin{table}[ht]
    \centering
    \caption{Fine-tuning (left) and end-to-end training (right). Final test accuracy: mean and standard deviation over three runs.}
    \label{tab:results_imagenet}
    \begin{tabular}{c c c}
    \toprule
    \multirow{2}{*}{BC~\citep{courbariaux2015}} & \multicolumn{2}{c}{PC (ours)} \\ \cmidrule{2-3}
    & $\rho_0=\num{2e-2}$ & $\rho_0=\num{4e-2}$ \\
    \midrule
    \textbf{65.84} (0.04) & 65.44 (0.13) & 65.70 (0.04) \\
    \bottomrule
\end{tabular}
    \begin{tabular}{c c c}
    \toprule
    \multirow{2}{*}{BC~\citep{courbariaux2015}} & \multicolumn{2}{c}{PC (ours)} \\ \cmidrule{2-3}
    & $\rho_0=\num{2.5e-3}$ & $\rho_0=\num{5e-3}$ \\
    \midrule
    63.79 (0.12) &\textbf{63.89} (0.14) & 63.67 (0.12) \\ 
    \bottomrule
\end{tabular}
    \vspace{-.5em}
\end{table}

\section{Conclusion} \label{sec:conclusion}
Capitalizing on a principled approach for designing quantizers and a surprising connection between BinaryConnect and the generalized conditional gradient (GCG) algorithm, we proposed ProxConnect as a unification and generalization of existing neural network quantization algorithms. Our analysis refines prior convergence guarantees and our experiments confirm the competitiveness of ProxConnect. In future work, we plan to apply ProxConnect to training other models such as transformers. The connection with GCG also opens the possibility for further acceleration.

\section*{Acknowledgement}
We thank the anonymous reviewers for their constructive comments as well as the area chair and the senior area chair for overseeing the review process. YY thanks NSERC for (partial) funding support. 

\clearpage
\bibliographystyle{plainnat}
\bibliography{neurips_2021}

\begin{thebibliography}{53}
\providecommand{\natexlab}[1]{#1}
\providecommand{\url}[1]{\texttt{#1}}
\expandafter\ifx\csname urlstyle\endcsname\relax
  \providecommand{\doi}[1]{doi: #1}\else
  \providecommand{\doi}{doi: \begingroup \urlstyle{rm}\Url}\fi

\bibitem[Ajanthan et~al.(2021)Ajanthan, Gupta, Torr, Hartley, and
  Dokania]{ajanthan2021}
Thalaiyasingam Ajanthan, Kartik Gupta, Philip Torr, Richard Hartley, and Puneet
  Dokania.
\newblock \href{http://proceedings.mlr.press/v130/ajanthan21a.html}{Mirror
  {D}escent {V}iew for {N}eural {N}etwork {Q}uantization}.
\newblock In \emph{International Conference on Artificial Intelligence and
  Statistics}, 2021.

\bibitem[Alizadeh et~al.(2019)Alizadeh, Fernández-Marqués, Lane, and
  Gal]{alizadeh2018empiricalbnn}
Milad Alizadeh, Javier Fernández-Marqués, Nicholas~D. Lane, and Yarin Gal.
\newblock \href{https://openreview.net/forum?id=rJfUCoR5KX}{An {E}mpirical
  study of {B}inary {N}eural {N}etworks' {O}ptimisation}.
\newblock In \emph{International Conference on Learning Representations}, 2019.

\bibitem[Anderson and Berg(2018)]{anderson2018}
Alexander~G. Anderson and Cory~P. Berg.
\newblock \href{https://openreview.net/forum?id=B1IDRdeCW}{The
  {H}igh-{D}imensional {G}eometry of {B}inary {N}eural {N}etworks}.
\newblock In \emph{International Conference on Learning Representations}, 2018.

\bibitem[Bach(2015)]{Bach13a}
Francis Bach.
\newblock \href{https://epubs.siam.org/doi/10.1137/130941961}{Duality {B}etween
  {S}ubgradient and {C}onditional {G}radient {M}ethods}.
\newblock \emph{{SIAM} Journal on Optimization}, 25\penalty0 (1):\penalty0
  115–129, 2015.

\bibitem[Bai et~al.(2018)Bai, Wang, and Liberty]{bai2018}
Yu~Bai, Yu-Xiang Wang, and Edo Liberty.
\newblock \href{https://openreview.net/forum?id=HyzMyhCcK7}{Prox{Q}uant:
  {Q}uantized {N}eural {N}etworks via {P}roximal {O}perators}.
\newblock In \emph{International Conference on Learning Representations}, 2018.

\bibitem[Bengio et~al.(2013)Bengio, L{\'e}onard, and
  Courville]{Bengio_STE_2013}
Yoshua Bengio, Nicholas L{\'e}onard, and Aaron Courville.
\newblock \href{https://arxiv.org/abs/1308.3432}{Estimating or {P}ropagating
  {G}radients {T}hrough {S}tochastic {N}eurons for {C}onditional
  {C}omputation}.
\newblock \emph{arXiv:1308.3432}, 2013.

\bibitem[Berrada et~al.(2019)Berrada, Zisserman, and Kumar]{berrada2018deep}
Leonard Berrada, Andrew Zisserman, and M.~Pawan Kumar.
\newblock \href{https://openreview.net/forum?id=SyVU6s05K7}{{Deep Frank--Wolfe
  For Neural Network Optimization}}.
\newblock In \emph{International Conference on Learning Representations}, 2019.

\bibitem[Bredies and Lorenz(2008)]{BrediesLorenz08}
Kristian Bredies and Dirk~A. Lorenz.
\newblock
  \href{https://epubs.siam.org/doi/abs/10.1137/060663556?casa_token=wjCnu6wHsGgAAAAA:LkV29KvtwgTVSYLLXTrDvYBEH4EPLeEnTIwGV-PosjkLhfzKhHN2ES5RHKYgx55u-ssPD2siRoI}{Iterated
  {H}ard {S}hrinkage for {M}inimization {P}roblems with {S}parsity
  {C}onstraints}.
\newblock \emph{{SIAM} Journal on Scientific Computing}, 30\penalty0
  (2):\penalty0 657--683, 2008.

\bibitem[Brown and et~al.(2020)]{gpt3}
Tom Brown and et~al.
\newblock
  \href{https://papers.nips.cc/paper/2020/hash/1457c0d6bfcb4967418bfb8ac142f64a-Abstract.html}{Language
  {M}odels are {F}ew-{S}hot {L}earners}.
\newblock In \emph{Advances in Neural Information Processing Systems}, pages
  1877--1901, 2020.

\bibitem[Chen et~al.(2019)Chen, Wang, and Pan]{ChenWP19}
Shangyu Chen, Wenya Wang, and Sinno~Jialin Pan.
\newblock
  \href{https://papers.nips.cc/paper/2019/hash/f8e59f4b2fe7c5705bf878bbd494ccdf-Abstract.html}{Meta{Q}uant:
  Learning to {Q}uantize by {L}earning to {P}enetrate {N}on-differentiable
  {Q}uantization}.
\newblock In \emph{Advances in Neural Information Processing Systems}, 2019.

\bibitem[Courbariaux et~al.(2015)Courbariaux, Bengio, and
  David]{courbariaux2015}
Matthieu Courbariaux, Yoshua Bengio, and Jean-Pierre David.
\newblock
  \href{https://papers.nips.cc/paper/2015/hash/3e15cc11f979ed25912dff5b0669f2cd-Abstract.html}{Binary{C}onnect:
  Training {D}eep {N}eural {N}etworks with binary weights during propagations}.
\newblock In \emph{Advances in Neural Information Processing Systems},
  volume~28, pages 3123--3131, 2015.

\bibitem[Deng et~al.(2009)Deng, Dong, Socher, Li, Li, and Fei-Fei]{deng2009}
Jia Deng, Wei Dong, Richard Socher, Li-Jia Li, Kai Li, and Li~Fei-Fei.
\newblock
  \href{https://ieeexplore.ieee.org/abstract/document/5206848?casa_token=fZ2imNaTVtsAAAAA:QO463k21ssqVG9gq4yTJPc93iK3myY1Fwf6VavZq5QGpyJcEDV5yx6BFUBAxAFv4Eudg4eH3ew}{Image{N}et:
  A large-scale hierarchical image database}.
\newblock In \emph{IEEE Conference on Computer Vision and Pattern Recognition},
  pages 248--255, 2009.

\bibitem[Ding et~al.(2019)Ding, Liu, Xiong, and Shi]{DingLXS19}
Yukun Ding, Jinglan Liu, Jinjun Xiong, and Yiyu Shi.
\newblock \href{https://openreview.net/forum?id=SJe9rh0cFX}{On the {U}niversal
  {A}pproximability and {C}omplexity {B}ounds of {Q}uantized {R}e{LU} {N}eural
  {N}etworks}.
\newblock In \emph{International Conference on Learning Representations}, 2019.

\bibitem[Dvurechensky et~al.(2020)Dvurechensky, Ostroukhov, Safin, Shtern, and
  Staudigl]{dvurechensky2020self}
Pavel Dvurechensky, Petr Ostroukhov, Kamil Safin, Shimrit Shtern, and Mathias
  Staudigl.
\newblock
  \href{http://proceedings.mlr.press/v119/dvurechensky20a.html}{{Self-Concordant
  Analysis of Frank--Wolfe Algorithms}}.
\newblock In \emph{International Conference on Machine Learning}, 2020.

\bibitem[Guo(2018)]{guo2018survey}
Yunhui Guo.
\newblock \href{https://arxiv.org/abs/1808.04752}{A {S}urvey on {M}ethods and
  {T}heories of {Q}uantized {N}eural {N}etworks}.
\newblock \emph{arXiv:1808.04752}, 2018.

\bibitem[Gupta et~al.(2015)Gupta, Agrawal, Gopalakrishnan, and
  Narayanan]{GuptaAGN15}
Suyog Gupta, Ankur Agrawal, Kailash Gopalakrishnan, and Pritish Narayanan.
\newblock \href{http://proceedings.mlr.press/v37/gupta15.html}{Deep Learning
  with {L}imited {N}umerical {P}recision}.
\newblock In \emph{International Conference on Machine Learning}, 2015.

\bibitem[Han et~al.(2020)Han, Wang, Xu, Xu, Wu, and Xu]{han20d}
Kai Han, Yunhe Wang, Yixing Xu, Chunjing Xu, Enhua Wu, and Chang Xu.
\newblock \href{http://proceedings.mlr.press/v119/han20d.html}{Training
  {B}inary {N}eural {N}etworks through {L}earning with {N}oisy {S}upervision}.
\newblock In \emph{International Conference on Machine Learning}, 2020.

\bibitem[Han et~al.(2016)Han, Liu, Mao, Pu, Pedram, Horowitz, and
  Dally]{han2016}
Song Han, Xingyu Liu, Huizi Mao, Jing Pu, Ardavan Pedram, Mark~A Horowitz, and
  William~J Dally.
\newblock
  \href{https://dl.acm.org/doi/abs/10.1145/3007787.3001163?casa_token=b6SLWMfPV40AAAAA:wjUw9RwjMCjd3nXxtft5dQCffkQ7XjfsDxxUSAvUb5hrXmB2Sfr1RuYK28YoWdldLVFJURNZ71_VAA}{{EIE}:
  Efficient {I}nference {E}ngine on {C}ompressed {D}eep {N}eural {N}etwork}.
\newblock \emph{ACM SIGARCH Computer Architecture News}, 44\penalty0
  (3):\penalty0 243--254, 2016.

\bibitem[He et~al.(2016)He, Zhang, Ren, and Sun]{he2016deep}
Kaiming He, Xiangyu Zhang, Shaoqing Ren, and Jian Sun.
\newblock
  \href{https://openaccess.thecvf.com/content_cvpr_2016/html/He_Deep_Residual_Learning_CVPR_2016_paper.html}{Deep
  {R}esidual {L}earning for {I}mage {R}ecognition}.
\newblock In \emph{Proceedings of the IEEE Conference on Computer Vision and
  Pattern Recognition}, 2016.

\bibitem[Helwegen et~al.(2019)Helwegen, Widdicombe, Geiger, Liu, Cheng, and
  Nusselder]{HelwegenWGLCN19}
Koen Helwegen, James Widdicombe, Lukas Geiger, Zechun Liu, Kwang-Ting Cheng,
  and Roeland Nusselder.
\newblock
  \href{https://papers.nips.cc/paper/2019/hash/9ca8c9b0996bbf05ae7753d34667a6fd-Abstract.html}{Latent
  {W}eights {D}o {N}ot {E}xist: {R}ethinking {B}inarized {N}eural {N}etwork
  {O}ptimization}.
\newblock In \emph{Advances in Neural Information Processing Systems}, 2019.

\bibitem[Hooker et~al.(2020)Hooker, Moorosi, Clark, Bengio, and
  Denton]{hooker2020characterising}
Sara Hooker, Nyalleng Moorosi, Gregory Clark, Samy Bengio, and Emily Denton.
\newblock \href{https://arxiv.org/abs/2010.03058}{Characterising {B}ias in
  {C}ompressed {M}odels}.
\newblock \emph{arXiv:2010.03058}, 2020.

\bibitem[Hou et~al.(2019)Hou, Zhang, and Kwok]{HouZK19}
Lu~Hou, Ruiliang Zhang, and James~T. Kwok.
\newblock \href{https://openreview.net/forum?id=ryM_IoAqYX}{Analysis of
  {Q}uantized {M}odels}.
\newblock In \emph{International Conference on Learning Representations}, 2019.

\bibitem[Hubara et~al.(2017)Hubara, Courbariaux, Soudry, El-Yaniv, and
  Bengio]{hubara2017}
Itay Hubara, Matthieu Courbariaux, Daniel Soudry, Ran El-Yaniv, and Yoshua
  Bengio.
\newblock \href{https://jmlr.org/papers/v18/16-456.html}{Quantized {N}eural
  {N}etworks: Training {N}eural {N}etworks with {L}ow {P}recision {W}eights and
  {A}ctivations}.
\newblock \emph{Journal of Machine Learning Research}, 18\penalty0
  (1):\penalty0 6869--6898, 2017.

\bibitem[Krizhevsky(2009)]{krizhevsky2009}
Alex Krizhevsky.
\newblock \href{http://www.cs.toronto.edu/~kriz/cifar.html}{Learning {M}ultiple
  {L}ayers of {F}eatures from {T}iny {I}mages}, 2009.
\newblock The MIT License.

\bibitem[Leng et~al.(2018)Leng, Dou, Li, Zhu, and Jin]{LengDLZJ18}
Cong Leng, Zesheng Dou, Hao Li, Shenghuo Zhu, and Rong Jin.
\newblock
  \href{https://ojs.aaai.org/index.php/AAAI/article/view/11713}{Extremely {L}ow
  {B}it {N}eural {N}etwork: {S}queeze the {L}ast {B}it {O}ut {W}ith {ADMM}}.
\newblock In \emph{Proceedings of the Thirty-Second AAAI Conference on
  Artificial Intelligence}, 2018.

\bibitem[Li et~al.(2016)Li, Zhang, and Liu]{li2016arxiv}
Fengfu Li, Bo~Zhang, and Bin Liu.
\newblock \href{https://arxiv.org/abs/1605.04711}{Ternary {W}eight {N}etworks}.
\newblock \emph{arXiv:1605.04711}, 2016.

\bibitem[Li et~al.(2017)Li, De, Xu, Studer, Samet, and Goldstein]{LiDXSSG17}
Hao Li, Soham De, Zheng Xu, Christoph Studer, Hanan Samet, and Tom Goldstein.
\newblock
  \href{https://papers.nips.cc/paper/2017/hash/1c303b0eed3133200cf715285011b4e4-Abstract.html}{Training
  {Q}uantized {N}ets: A {D}eeper {U}nderstanding}.
\newblock In \emph{Advances in Neural Information Processing Systems}, 2017.

\bibitem[Lin et~al.(2020)Lin, Chen, Lin, Gan, and Han]{lin2020mcunet}
Ji~Lin, Wei-Ming Chen, Yujun Lin, Chuang Gan, and Song Han.
\newblock
  \href{https://proceedings.neurips.cc/paper/2020/file/86c51678350f656dcc7f490a43946ee5-Paper.pdf}{{MCUN}et:
  {T}iny {D}eep {L}earning on {I}o{T} {D}evices}.
\newblock \emph{Advances in Neural Information Processing Systems}, 2020.

\bibitem[Martinez et~al.(2020)Martinez, Yang, Bulat, and
  Tzimiropoulos]{Martinez2020Training}
Brais Martinez, Jing Yang, Adrian Bulat, and Georgios Tzimiropoulos.
\newblock \href{https://openreview.net/forum?id=BJg4NgBKvH}{Training binary
  neural networks with real-to-binary convolutions}.
\newblock In \emph{International Conference on Learning Representations}, 2020.

\bibitem[Moreau(1965)]{moreau1965proximite}
Jean-Jacques Moreau.
\newblock
  \href{http://www.numdam.org/item/?id=BSMF_1965__93__273_0}{{Proximit{\'e} et
  dualit{\'e} dans un espace hilbertien}}.
\newblock \emph{Bulletin de la Soci{\'e}t{\'e} Math{\'e}matique de France},
  93:\penalty0 273--299, 1965.

\bibitem[Nesterov(2009)]{nesterov2009}
Yurii Nesterov.
\newblock
  \href{https://link.springer.com/article/10.1007/s10107-007-0149-x}{Primal-dual
  subgradient methods for convex problems}.
\newblock \emph{Mathematical Programming}, 120\penalty0 (1):\penalty0 221--259,
  2009.

\bibitem[{Park} et~al.(2017){Park}, {Ahn}, and {Yoo}]{ParkAY17}
E.~{Park}, J.~{Ahn}, and S.~{Yoo}.
\newblock
  \href{https://ieeexplore.ieee.org/document/8100244}{Weighted-{E}ntropy-{B}ased
  {Q}uantization for {D}eep {N}eural {N}etworks}.
\newblock In \emph{{IEEE} Conference on Computer Vision and Pattern Recognition
  {(CVPR)}}, pages 7197--7205, 2017.

\bibitem[Partovi~Nia and Belbahri(2018)]{nia2018binary}
Vahid Partovi~Nia and Mouloud Belbahri.
\newblock
  \href{https://openjournals.uwaterloo.ca/index.php/vsl/article/view/334}{Binary
  {Q}uantizer}.
\newblock \emph{Journal of Computational Vision and Imaging Systems},
  4\penalty0 (1):\penalty0 3, 2018.

\bibitem[Passty(1979)]{passty1979ergodic}
Gregory~B Passty.
\newblock
  \href{https://www.sciencedirect.com/science/article/pii/0022247X79902348}{{Ergodic
  Convergence to a Zero of the Sum of Monotone Operators in Hilbert Space}}.
\newblock \emph{Journal of Mathematical Analysis and Applications}, 72\penalty0
  (2):\penalty0 383--390, 1979.

\bibitem[Peters and Welling(2018)]{peters2018probabilistic}
Jorn~WT Peters and Max Welling.
\newblock \href{https://arxiv.org/abs/1809.03368}{{Probabilistic Binary Neural
  Networks}}.
\newblock \emph{arXiv:1809.03368}, 2018.

\bibitem[Qin et~al.(2020)Qin, Gong, Liu, Bai, Song, and Sebe]{qin2020binary}
Haotong Qin, Ruihao Gong, Xianglong Liu, Xiao Bai, Jingkuan Song, and Nicu
  Sebe.
\newblock
  \href{https://www.sciencedirect.com/science/article/pii/S0031320320300856?casa_token=FlLWznLCMXoAAAAA:K5kZjb_9IOI9u4e_t1YNfaOxW_OeNeTDCyYqasMa9Pm-SE5mQPJcVumUQwangEAnlq9zKW8fxzU}{Binary
  {N}eural {N}etworks: A {S}urvey}.
\newblock \emph{Pattern Recognition}, 105:\penalty0 107281, 2020.

\bibitem[Rastegari et~al.(2016)Rastegari, Ordonez, Redmon, and
  Farhadi]{rastegari2016}
Mohammad Rastegari, Vicente Ordonez, Joseph Redmon, and Ali Farhadi.
\newblock
  \href{https://link.springer.com/chapter/10.1007\%2F978-3-319-46493-0_32}{{XNOR}-{N}et:
  {I}mage{N}et {C}lassification {U}sing {B}inary {C}onvolutional {N}eural
  {N}etworks}.
\newblock In \emph{European Conference on Computer Vision}, 2016.

\bibitem[Rockafellar and Wets(1998)]{RockafellarWets98}
R.~Tyrrell Rockafellar and Roger J-B Wets.
\newblock
  \emph{\href{https://link.springer.com/book/10.1007/978-3-642-02431-3}{Variational
  {A}nalysis}}.
\newblock Springer, 1998.

\bibitem[Shayer et~al.(2018)Shayer, Levi, and Fetaya]{shayer2018learning}
Oran Shayer, Dan Levi, and Ethan Fetaya.
\newblock \href{https://openreview.net/forum?id=BySRH6CpW}{{Learning Discrete
  Weights Using the Local Reparameterization Trick}}.
\newblock In \emph{International Conference on Learning Representations}, 2018.

\bibitem[Strubell et~al.(2019)Strubell, Ganesh, and
  McCallum]{strubell2019energy}
Emma Strubell, Ananya Ganesh, and Andrew McCallum.
\newblock \href{https://www.aclweb.org/anthology/P19-1355/}{Energy and {P}olicy
  {C}onsiderations for {D}eep {L}earning in {NLP}}.
\newblock In \emph{Proceedings of the 57th Annual Meeting of the Association
  for Computational Linguistics}, pages 3645--3650, 2019.

\bibitem[Wang et~al.(2019)Wang, Zhan, Zhao, Tang, Wang, Li, Yuan, Wen, and
  Lin]{WangZZTWLYWL19}
Yanzhi Wang, Zheng Zhan, Liang Zhao, Jian Tang, Siyue Wang, Jiayu Li, Bo~Yuan,
  Wujie Wen, and Xue Lin.
\newblock
  \href{https://ojs.aaai.org//index.php/AAAI/article/view/4475}{Universal
  {A}pproximation {P}roperty and {E}quivalence of {S}tochastic
  {C}omputing-{B}ased {N}eural {N}etworks and {B}inary {N}eural {N}etworks}.
\newblock In \emph{Proceedings of the Thirty-Third AAAI Conference on
  Artificial Intelligence}, 2019.

\bibitem[Xiao(2010)]{xiao2010}
Lin Xiao.
\newblock \href{https://jmlr.csail.mit.edu/papers/v11/xiao10a.html}{Dual
  {A}veraging {M}ethods for {R}egularized {S}tochastic {L}earning and {O}nline
  {O}ptimization}.
\newblock \emph{Journal of Machine Learning Research}, 11:\penalty0 2543--2596,
  2010.

\bibitem[Xu et~al.(2018)Xu, Yao, Lin, Ou, Cao, Wang, and Zha]{xu2018}
Chen Xu, Jianqiang Yao, Zhouchen Lin, Wenwu Ou, Yuanbin Cao, Zhirong Wang, and
  Hongbin Zha.
\newblock \href{https://openreview.net/forum?id=S19dR9x0b}{Alternating
  {M}ulti-bit {Q}uantization for {R}ecurrent {N}eural {N}etworks}.
\newblock In \emph{International Conference on Learning Representations}, 2018.

\bibitem[Yin et~al.(2019{\natexlab{a}})Yin, Zhang, Qi, and Xin]{yin2019}
P~Yin, S~Zhang, YY~Qi, and J~Xin.
\newblock \href{https://par.nsf.gov/biblio/10112553}{Quantization and
  {T}raining of {L}ow {B}it-{W}idth {C}onvolutional {N}eural {N}etworks for
  {O}bject {D}etection}.
\newblock \emph{Journal of Computational Mathematics}, 37\penalty0 (3),
  2019{\natexlab{a}}.

\bibitem[Yin et~al.(2018)Yin, Zhang, Lyu, Osher, Qi, and Xin]{yin2018}
Penghang Yin, Shuai Zhang, Jiancheng Lyu, Stanley Osher, Yingyong Qi, and Jack
  Xin.
\newblock
  \href{https://epubs.siam.org/doi/abs/10.1137/18M1166134}{Binary{R}elax: A
  {R}elaxation {A}pproach for {T}raining {D}eep {N}eural {N}etworks with
  {Q}uantized {W}eights}.
\newblock \emph{{SIAM} Journal on Imaging Sciences}, 11\penalty0 (4):\penalty0
  2205--2223, 2018.

\bibitem[Yin et~al.(2019{\natexlab{b}})Yin, Lyu, Zhang, Osher, Qi, and
  Xin]{YinLZOQX19}
Penghang Yin, Jiancheng Lyu, Shuai Zhang, Stanley~J. Osher, Yingyong Qi, and
  Jack Xin.
\newblock \href{https://openreview.net/forum?id=Skh4jRcKQ}{Understanding
  {S}traight-{T}hrough {E}stimator in {T}raining {A}ctivation {Q}uantized
  {N}eural {N}ets}.
\newblock In \emph{International Conference on Learning Representations},
  2019{\natexlab{b}}.

\bibitem[Yu(2013)]{Yu13}
Yaoliang Yu.
\newblock
  \emph{\href{https://cs.uwaterloo.ca/~y328yu/mypapers/thesis_long.pdf}{Fast
  Gradient Methods for Structured Sparsity}}.
\newblock PhD thesis, University of Alberta, 2013.

\bibitem[Yu et~al.(2015)Yu, Zheng, Marchetti-Bowick, and Xing]{yu2015}
Yaoliang Yu, Xun Zheng, Micol Marchetti-Bowick, and Eric Xing.
\newblock \href{http://proceedings.mlr.press/v38/yu15.html}{Minimizing
  {N}onconvex {N}on-{S}eparable {F}unctions}.
\newblock In \emph{Artificial Intelligence and Statistics}, 2015.

\bibitem[Yu et~al.(2017)Yu, Zhang, and Schuurmans]{YuZS17}
Yaoliang Yu, Xinhua Zhang, and Dale Schuurmans.
\newblock \href{https://www.jmlr.org/papers/v18/14-348.html}{Generalized
  {C}onditional {G}radient for {S}tructured {S}parse {E}stimation}.
\newblock \emph{Journal of Machine Learning Research}, 18:\penalty0 1--46,
  2017.

\bibitem[Zhang et~al.(2020)Zhang, Chen, Mokhtari, Hassani, and
  Karbasi]{zhang2020quantized}
Mingrui Zhang, Lin Chen, Aryan Mokhtari, Hamed Hassani, and Amin Karbasi.
\newblock \href{http://proceedings.mlr.press/v108/zhang20g.html}{{Quantized
  Frank--Wolfe: Faster Optimization, Lower Communication, and Projection
  Free}}.
\newblock In \emph{International Conference on Artificial Intelligence and
  Statistics}, 2020.

\bibitem[Zhou et~al.(2017)Zhou, Yao, Guo, Xu, and Chen]{ZhouYGXC17}
Aojun Zhou, Anbang Yao, Yiwen Guo, Lin Xu, and Yurong Chen.
\newblock \href{https://openreview.net/forum?id=HyQJ-mclg}{Incremental
  {N}etwork {Q}uantization: {T}owards {L}ossless {CNN}s with {L}ow-precision
  {W}eights}.
\newblock In \emph{International Conference on Learning Representations}, 2017.

\bibitem[Zhou et~al.(2016)Zhou, Wu, Ni, Zhou, Wen, and Zou]{ZhouWNZWZ16}
Shuchang Zhou, Yuxin Wu, Zekun Ni, Xinyu Zhou, He~Wen, and Yuheng Zou.
\newblock \href{https://arxiv.org/abs/1606.06160}{Do{R}e{F}a-{N}et: {T}raining
  low {B}itwidth {C}onvolutional {N}eural {N}etworks with {L}ow {B}itwidth
  {G}radients}.
\newblock \emph{arXiv:1606.06160}, 2016.

\bibitem[Zhu et~al.(2018)Zhu, Han, Mao, and Dally]{zhu2016}
Chenzhuo Zhu, Song Han, Huizi Mao, and William~J Dally.
\newblock \href{https://openreview.net/forum?id=S1_pAu9xl}{Trained {T}ernary
  {Q}uantization}.
\newblock In \emph{International Conference on Learning Representations}, 2018.

\end{thebibliography}

%\newpage
%\section*{Checklist}
%\input{sections/checklist}

\clearpage
\appendix

\section{Proofs}\label{sec:app-proof}
In this section we include all proofs omitted in the main paper, and supply some additional comments. 

\subsection{Proofs for \texorpdfstring{\Cref{sec:prox}}{Section 3}} \label{app:proof_prox}
\proxmap*
\begin{proof}
Define $\tP(\wv) = \sum_i \alpha_i \tP_i(\wv)$ if each $\tP_i(\wv)$ is single-valued. It is known that any proximal map $\tP_i$ is almost everywhere single-valued \cite{RockafellarWets98}, thus $\tP$ is almost everywhere defined. Then, if necessary we take the closure of the graph of $\tP$ so that it is defined everywhere. (This last step can be omitted if we always take the upper or lower limits at any jump of $\tP_i$.)
With this interpretation of the average in \eqref{eq:pa}, the rest of the first claim then follows from \cite[Proposition 4,][]{yu2015}. 

For the second claim about the product map in \eqref{eq:pp}, let $\tP_i$ be the proximal map of $f_i(\wv_i^*)$. Then, it follows immediately from the definition \eqref{eq:proximal_operator} that the product map $\tP$ is the proximal map of the sum function $f(\wv^*) := \sum_i f_i(\wv_i^*)$, where $\wv^* = (\wv_1^*, \ldots, \wv_k^*)$.
\end{proof}

\subsection{Proofs for \texorpdfstring{\Cref{sec:method}}{Section 4}} \label{app:method}
Let us first record a general result for iterates of the following type:
\begin{align}
\label{eq:iter-gen}
\ws_{t+1}^* = \ws_t^* + \eta_t \zv_t^*,
\end{align}
where $\ws_0^*$ is given and $\zv_t^*$ may be arbitrary.
\begin{lemma}
\label{thm:gcg-gen}
For any $\wv$, any sequence of $\wv_t$, and arbitrary function $g$, the iterates \eqref{eq:iter-gen} satisfy:
\begin{align}
\sum_{\tau=s}^t \eta_\tau [ \inner{\wv_\tau \!-\! \wv}{-\zv^*_\tau} \!+\! g(\wv_\tau) \!-\! g(\wv) ] = 
\delta_{s-1}(\wv) \!-\! \delta_t(\wv) \!+\! \sum_{\tau=s}^t \delta_\tau(\wv_\tau),
\end{align}
where $\delta_\tau(\wv) \coloneqq \tfrac{1}{\pi_\tau}g(\wv) - \tfrac{1}{\pi_\tau}g(\wv_{\tau+1}) - \inner{\wv-\wv_{\tau+1}}{\ws_{\tau+1}^*}$ and $\tfrac{1}{\pi_t} = 1+\sum_{\tau=1}^t \eta_\tau$.% (with $\tfrac{1}{\pi_{-1}}\coloneqq 1-\eta_0$).
\end{lemma}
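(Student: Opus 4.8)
The statement is a purely algebraic (telescoping) identity that holds for \emph{any} choices of the quantities involved, so the plan is to reduce it to a one-step identity and sum. The only structural fact I would use about the weights is that $\tfrac{1}{\pi_t} = 1 + \sum_{\tau=1}^t \eta_\tau$ implies $\tfrac{1}{\pi_\tau} - \tfrac{1}{\pi_{\tau-1}} = \eta_\tau$ for every $\tau$ in range (with the convention $\tfrac{1}{\pi_{-1}} = 1 - \eta_0$, forced by extending this difference relation to $\tau = 0$ and consistent with $\pi_{-1} = (1-\lambda_0)^{-1}$ from \Cref{sec:bc}); the iterate recursion enters only through $\ws_{\tau+1}^* - \ws_\tau^* = \eta_\tau \zv_\tau^*$.

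First I would telescope the boundary terms on the right-hand side, writing $\delta_{s-1}(\wv) - \delta_t(\wv) = \sum_{\tau=s}^t [\delta_{\tau-1}(\wv) - \delta_\tau(\wv)]$, so that it suffices to establish, for each fixed $\tau$, the single-step identity
\[ \eta_\tau \big[ \inner{\wv_\tau - \wv}{-\zv_\tau^*} + g(\wv_\tau) - g(\wv) \big] = \delta_{\tau-1}(\wv) - \delta_\tau(\wv) + \delta_\tau(\wv_\tau), \]
since summing this over $\tau = s, \ldots, t$ reproduces the claimed equation exactly.

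To prove the single-step identity I would expand the three $\delta$-terms $\delta_{\tau-1}(\wv)$, $\delta_\tau(\wv)$, $\delta_\tau(\wv_\tau)$ from their definitions and collect by type. The terms in $g(\wv_{\tau+1})$ cancel; the coefficients of $g(\wv)$ and of $g(\wv_\tau)$ come out to $\tfrac{1}{\pi_{\tau-1}} - \tfrac{1}{\pi_\tau} = -\eta_\tau$ and $\tfrac{1}{\pi_\tau} - \tfrac{1}{\pi_{\tau-1}} = \eta_\tau$ respectively, which matches the $g$-part of the left-hand side. After the $\wv_{\tau+1}$ contributions inside the inner products cancel, the remaining inner-product terms collapse to $\inner{\wv - \wv_\tau}{\ws_{\tau+1}^* - \ws_\tau^*}$, which by the iterate recursion equals $\eta_\tau \inner{\wv - \wv_\tau}{\zv_\tau^*} = \eta_\tau \inner{\wv_\tau - \wv}{-\zv_\tau^*}$, matching the rest of the left-hand side.

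There is no genuine obstacle here: this is elementary manipulation, and crucially nothing is assumed about $g$, the sequence $\wv_\tau$, or the $\zv_\tau^*$ (no convexity, no smoothness, no relation between $\wv_\tau$ and $\ws_\tau^*$). The only care needed is index bookkeeping around the endpoints $\delta_{s-1}$ and $\delta_t$ and using the $\tfrac{1}{\pi_{-1}}$ convention that keeps $\tfrac{1}{\pi_\tau} - \tfrac{1}{\pi_{\tau-1}} = \eta_\tau$ uniformly valid.
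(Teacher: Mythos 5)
Your proposal is correct and is essentially the paper's own argument read in reverse: the paper starts from the left-hand side, substitutes $\eta_\tau\zv_\tau^* = \ws_{\tau+1}^* - \ws_\tau^*$ and $\eta_\tau = \tfrac{1}{\pi_\tau} - \tfrac{1}{\pi_{\tau-1}}$, regroups into $\delta_\tau(\wv_\tau) - \delta_\tau(\wv) + \delta_{\tau-1}(\wv)$, and then telescopes, whereas you untelescope the boundary first and verify the resulting one-step identity by direct expansion. The underlying algebra and the two structural facts used (the $\pi$-difference relation and the iterate recursion) are identical, and your $\tfrac{1}{\pi_{-1}} = 1 - \eta_0$ convention matches the paper's $\pi_{-1} = (1-\lambda_0)^{-1}$ once one recalls $\eta_0 = \lambda_0/\pi_0 = \lambda_0$.
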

\begin{proof}
The proof is simple algebra (and was discovered by abstracting the original, tedious proof of \Cref{thm:gcg}). For any $\rvw$, we verify:
\begin{align}
\sum_{\tau=s}^t \eta_\tau &[ \inner{\wv_\tau \!-\! \rvw}{-\zv^*_\tau} \!+\! g(\wv_\tau) \!-\! g(\rvw) ]
\\
&=  
\sum_{\tau=s}^t \inner{\wv_\tau - \rvw}{\ws_{\tau}^*-\ws_{\tau+1}^*} + \eta_\tau [g(\wv_\tau) - g(\rvw) ]
\\
&=  
\sum_{\tau=s}^t \inner{\wv_\tau - \rvw}{\ws_{\tau}^*-\ws_{\tau+1}^*} + (\tfrac{1}{\pi_\tau} - \tfrac{1}{\pi_{\tau-1}})[g(\wv_\tau) - g(\rvw) ]
\\
&=  
\sum_{\tau=s}^t -\inner{\wv_\tau \!-\! \wv_{\tau+1}}{\ws_{\tau+1}^*} \!+\! \inner{\rvw \!-\! \wv_{\tau+1}}{\ws_{\tau+1}^*} \!-\! \inner{\rvw\!-\!\wv_\tau}{\ws_{\tau}^*} \!+\! (\tfrac{1}{\pi_\tau} \!-\! \tfrac{1}{\pi_{\tau-1}})[g(\wv_\tau) \!-\! g(\rvw) ]
\\
\label{eq:rec-gen}
&=
\sum_{\tau=s}^t \delta_{\tau}(\wv_{\tau})  \!-\! \delta_{\tau}(\rvw) \!+\! \delta_{\tau-1}(\rvw).
\end{align}
Telescoping completes the proof. 
%which is exactly the RHS after telescoping.
\end{proof}

\gcg*
\begin{proof}
We first expand the recursion
\begin{align} \label{eq:gcg-rec-pre}
\wv_{t+1}^* &= (1-\lambda_t)\wv_t^* + \lambda_t \zv_{t}^* \\
\label{eq:gcg-rec}
&= (1-\lambda_t)(1-\lambda_{t-1}) \wv_{t-1}^* + \lambda_t \zv_t^* + (1-\lambda_t) \lambda_{t-1}\zv_{t-1}^*\\
&= \dots \\
&=(1-\lambda_0)\pi_t \wv_0^* + \sum_{\tau=0}^t \tfrac{\pi_t}{\pi_\tau} \lambda_\tau \zv_\tau^*, \label{eq:gcg-rec-last}
\end{align}
where $\pi_t \coloneqq \prod_{\tau=1}^t (1-\lambda_\tau)$ for $t \geq 1$ while $\pi_0 \coloneqq 1$ and $\lambda_t \in [0,1]$. We deduce (by, for instance, setting $\zv_t^* = \wv_t^* \equiv 1$, for all $t \geq 0$, in \eqref{eq:gcg-rec-last}) that 
\begin{align}
\label{eq:pi-tmp}
\frac{1}{\pi_t} = (1-\lambda_0) + \sum_{\tau=0}^t \frac{\lambda_\tau}{\pi_\tau} = 1 + \sum_{\tau=1}^t \frac{\lambda_\tau}{\pi_\tau}.
\end{align}
Define $\tfrac{1}{\pi_{-1}} \coloneqq 1-\lambda_0$ and $\zv_t^* = -\nabla \ell^{**}(\wv_t)$. We apply the convexity of $\ell^{**}$ to the LHS of \eqref{eq:gcg}:
\begin{align}
\mathrm{LHS} &= \sum_{\tau=0}^t \tfrac{\lambda_\tau}{\pi_\tau} [\ell^{**}(\wv_\tau) \!+\! \rsf^{**}(\wv_\tau) \!-\! \ell^{**}(\wv) \!-\! \rsf^{**}(\wv)] \\
&\leq
\sum_{\tau=0}^t \tfrac{\lambda_\tau}{\pi_\tau} [\inner{\wv_\tau - \wv}{-\zv_\tau^*} + \rsf^{**}(\wv_\tau) - \rsf^{**}(\wv) ].
\end{align}
Next, we identify $\eta_\tau \coloneqq \tfrac{\lambda_\tau}{\pi_\tau}$, $g = \rsf^{**}$, $\ws^*_{\tau+1} \coloneqq \wv^{*}_{\tau+1}/\pi_{\tau} = \ws^*_{\tau} + \eta_\tau \zv_\tau^*$, and $\delta_\tau = \tfrac{1}{\pi_\tau}\Delta$ so that we can apply \Cref{thm:gcg-gen}:
\begin{align}
\mathrm{LHS} &\leq
\sum_{\tau=0}^t \tfrac{\lambda_\tau}{\pi_\tau} [\inner{\wv_\tau - \wv}{-\zv_\tau^*} + \rsf^{**}(\wv_\tau) - \rsf^{**}(\wv) ] \\
&\leq 
(1-\lambda_0)\Delta(\wv, \wv_0) - \tfrac{1}{\pi_t}\Delta(\wv, \wv_{t+1}) + \sum_{\tau=0}^t \tfrac{1}{\pi_{\tau}}\Delta(\wv_\tau, \wv_{\tau+1})
\\
&\leq (1-\lambda_0) \Delta(\wv, \wv_0) + 
\sum_{\tau=0}^t \tfrac{1}{\pi_\tau}\tfrac{L}{2} \|\wv_{\tau+1}^* \!-\! \wv_\tau^*\|_2^2,
\end{align}
where in the last step we applied the nonnegativity of the Bregman divergence $\Delta$ (when induced by a convex function such as $\rsf^{**}$) as well as the following inequality:
\begin{align}
\Delta(\wv_\tau, \wv_{\tau+1}) &= \rsf^{**}( \wv_{\tau}) - \rsf^{**}(\wv_{\tau+1}) - \inner{\wv_{\tau} - \wv_{\tau+1}}{\wv_{\tau+1}^*} \\
&= \rsf^{**}( \wv_{\tau}) - \rsf^{**}(\wv_{\tau+1}) - \inner{\wv_{\tau} - \wv_{\tau+1}}{\nabla \rsf^{**}(\wv_{\tau+1})} \quad (\wv_t = \nabla\rsf^{*}(\wv_t^*) \iff \wv_t^* = \nabla\rsf^{**}(\wv_t)) \\
&= \rsf^{***}(\wv_{\tau+1}^*) - \rsf^{***}(\wv_{\tau}^*) - \inner{\wv_{\tau+1}^* - \wv_{\tau}^*}{\nabla \rsf^{***}(\nabla \rsf^{**} (\wv_{\tau}))} \quad (\text{by duality of Bregman divergence}) \\
&= \rsf^{*}(\wv_{\tau+1}^*) - \rsf^{*}(\wv_{\tau}^*) - \inner{\wv_{\tau+1}^* - \wv_{\tau}^*}{\nabla \rsf^*(\nabla \rsf^{**} (\wv_{\tau}))} \quad (\text{by convexity of} \, \rsf^{***})\\
&= \rsf^{*}(\wv_{\tau+1}^*) - \rsf^{*}(\wv_{\tau}^*) - \inner{\wv_{\tau+1}^* - \wv_{\tau}^*}{\wv_{\tau}} \quad \left(\mbox{since }\nabla \rsf^{**} = \left(\nabla \rsf^* \right)^{-1}\right)\\
&\leq \inner{\wv_{\tau+1}^* - \wv_{\tau}^*}{\nabla \rsf^*(\wv_{\tau}^{*})} + \tfrac{L}{2}\|\wv_{\tau+1}^* - \wv_\tau^*\|_2^2 - \inner{\wv_{\tau+1}^* - \wv_{\tau}^*}{\wv_{\tau}} \quad (\text{by smoothness of} \, \rsf^*) \\
&= \tfrac{L}{2}\|\wv_{\tau+1}^* - \wv_\tau^*\|_2^2 \quad (\mbox{since  }\rvw_\tau \coloneqq \nabla \rsf^* (\wv_\tau^*)).
\end{align}
Applying \eqref{eq:gcg-rec-pre} completes the proof.
\end{proof}
\corfourtwo*
\begin{proof}
By the convexity of $\rsf^{**}$ and $\ell^{**}$ as well as the fact that the sum of two convex functions is convex, we have:
\begin{align}
    (\ell^{**}+\rsf^{**})(\bar \rvw_t) \leq \sum_{\tau=0}^t \Lambda_{t, \tau} (\ell^{**}+\rsf^{**})(\rvw_\tau).
\end{align}
Inserting the above in~\Cref{thm:gcg} (multiplied $1/H_t$) and noting that $\sum_{\tau=0}^t \Lambda_{t, \tau} = 1$, we have
\begin{align}
    (\ell^{**}+\rsf^{**})(\bar \rvw_t) - (\ell^{**}+\rsf^{**})(\rvw) &\leq \sum_{\tau=0}^t \Lambda_{t, \tau} \left[(\ell^{**}+\rsf^{**})(\rvw_\tau) - (\ell^{**}+\rsf^{**})(\rvw) \right] \\
    &\leq \tfrac{(1\!-\!\lambda_0)\Delta(\wv,\wv_0)}{H_t} \!+\! \tfrac{L}{2}\sum_{\tau=0}^t \lambda_\tau \Lambda_{t,\tau}\|\wv_\tau^* \!-\! \zv_\tau^*\|_2^2.
\end{align}
\end{proof}
The following instantiation of \Cref{cor:fourtwo} is notable: setting $\lambda_t = \tfrac{1}{t+1}$ implies $\pi_t = \lambda_t$, $H_t = t+1$, and therefore $\bar \rvw_t$ is simply ergodic averaging, i.e., 
\begin{align}
  \bar\wv_t = \frac{1}{t+1}\sum_{s=0}^t \wv_t. 
\end{align} 
The right-hand side of \eqref{eq:gcg-a} diminishes at the rate of $O(\tfrac{\log t}{t})$ since
\begin{align}
    \sum_{\tau=0}^t \lambda_t \Lambda_{t, \tau} = \frac{1}{t+1} \sum_{\tau=0}^t \frac{1}{\tau + 1}, ~~\mbox{and}~~ \sum_{\tau=0}^t \frac{1}{\tau + 1} = O(\log t).
\end{align}
We remark that the log factor can be removed if we set $\lambda_t = \tfrac{2}{t+2}$ instead, for which we have
\begin{align}
    \pi_t &= \prod_{\tau=1}^t \frac{\tau}{\tau + 2} = \frac{2}{(t+1)(t+2)}, \\
    H_t &= \sum_{\tau=0}^t \frac{\lambda_\tau}{\pi_\tau} = \sum_{\tau=0}^t \frac{2}{\tau + 2} \frac{(\tau + 1)(\tau+2)}{2} = \frac{1}{2} (t+1) (t+2), \\
    \sum_{\tau=0}^t \lambda_t \Lambda_{t, \tau} &= \frac{4}{(t+1)(t+2)} \sum_{\tau=0}^t \frac{\tau + 1}{\tau + 2}, ~~\mbox{and}~~ \sum_{\tau=0}^t \frac{\tau + 1}{\tau + 2} = \gO(t).
\end{align}
\begin{prop} 
\label{prop:gcg-dual}
\begin{align}
\nabla \env_{\rsf^*}^{\peta}(\wv_t^*) = \tP_{\rsf^{**}}^{1/\peta}(\wv_t^*/\peta)
\end{align}
\end{prop}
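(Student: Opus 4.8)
The plan is to derive this from the two facts recalled immediately before the statement: $\env_{\rsf^*}^{\peta}$ is convex (being an infimal convolution of the convex functions $\rsf^*$ and $\tfrac{1}{2\peta}\|\cdot\|_2^2$) and $(1/\peta)$-smooth, hence everywhere differentiable; and its conjugate is $(\env_{\rsf^*}^{\peta})^* = \rsf^{**} + \tfrac{\peta}{2}\|\cdot\|_2^2$. The main tool is the classical inversion rule for a closed convex proper function $h$: $y = \nabla h(x)$ if and only if $x \in \partial h^*(y)$. Since $\rsf$ is bounded below, $\rsf^*$ and the biconjugate $\rsf^{**}$ are proper, closed and convex, so all conjugation and subdifferential manipulations below are legitimate and the proximal map of $\rsf^{**}$ is single-valued.

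First I would apply the inversion rule to $h = \env_{\rsf^*}^{\peta}$. Writing $\wv_t = \nabla\env_{\rsf^*}^{\peta}(\wv_t^*)$ and using $h^* = \rsf^{**} + \tfrac{\peta}{2}\|\cdot\|_2^2$ together with the sum rule for subdifferentials, this is equivalent to $\wv_t^* \in \partial\rsf^{**}(\wv_t) + \peta\,\wv_t$, i.e. $\tfrac{1}{\peta}\wv_t^* - \wv_t \in \partial\!\big(\tfrac{1}{\peta}\rsf^{**}\big)(\wv_t)$. This last inclusion is precisely the (necessary and sufficient, by strong convexity) optimality condition for $\wv_t$ to minimize $\wv \mapsto \tfrac{1}{2}\|\wv - \wv_t^*/\peta\|_2^2 + \tfrac{1}{\peta}\rsf^{**}(\wv)$, which is exactly the problem defining $\tP_{\rsf^{**}}^{1/\peta}(\wv_t^*/\peta)$ in the convention of \eqref{eq:proximal_operator}. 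Uniqueness of that minimizer then yields $\wv_t = \tP_{\rsf^{**}}^{1/\peta}(\wv_t^*/\peta)$, which is the claim.

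Equivalently, and perhaps more transparently, I could combine two classical identities of Moreau: the gradient formula $\nabla\env_{\rsf^*}^{\peta}(\wv^*) = \tfrac{1}{\peta}\big(\wv^* - \tP_{\rsf^*}^{\peta}(\wv^*)\big)$ and the extended Moreau decomposition $\wv^* = \tP_{\rsf^*}^{\peta}(\wv^*) + \peta\,\tP_{\rsf^{**}}^{1/\peta}(\wv^*/\peta)$, whose combination gives the identity at once. There is essentially no conceptual obstacle here; the only thing to be careful about is the parameter bookkeeping between this paper's convention $\tP_{\rsf}^{\peta}(u) = \argmin_{\wv}\tfrac{1}{2\peta}\|\wv-u\|_2^2 + \rsf(\wv)$ and the parametrization used in standard references — in particular the scaling of the prox argument by $1/\peta$, which is what produces the $\wv_t^*/\peta$ on the right-hand side — together with confirming the nondegeneracy of $\rsf^{**}$ noted above.
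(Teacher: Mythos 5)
Your primary argument is correct but takes a genuinely different route from the paper's. The paper starts from the envelope-theorem formula $\nabla\env_{\rsf^*}^{\peta}(\wv^*)=\tfrac{1}{\peta}\bigl(\wv^*-\tP_{\rsf^*}^{\peta}(\wv^*)\bigr)$, invokes the Moreau decomposition $\wv^*=\tP_{\peta\rsf^*}^{1}(\wv^*)+\tP_{(\peta\rsf^*)^*}^{1}(\wv^*)$ to replace $\wv^*-\tP_{\rsf^*}^{\peta}(\wv^*)$ by $\tP_{\ssf}^{\peta}(\wv^*)$ with $\ssf(\cdot)=\rsf^{**}(\cdot/\peta)$, and then performs a change of variable inside the $\argmin$ to land on $\tP_{\rsf^{**}}^{1/\peta}(\wv^*/\peta)$. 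You instead pass directly through the conjugate identity $(\env_{\rsf^*}^{\peta})^*=\rsf^{**}+\tfrac{\peta}{2}\|\cdot\|_2^2$ (already recalled in the text), apply the inversion rule $y=\nabla h(x)\iff x\in\partial h^*(y)$, and read off the prox optimality condition. That is cleaner in the sense that it avoids the Moreau decomposition and the $\argmin$ bookkeeping entirely, trading them for the subdifferential sum rule and the Fenchel inversion rule; the paper's version is more computational and keeps everything at the level of prox maps rather than subdifferentials, which some readers find more concrete. Your ``equivalently'' paragraph, combining the Moreau-envelope gradient formula with the extended Moreau decomposition, is essentially the paper's proof stated at a slightly higher level. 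Both of your variants are sound; the only thing I would flag is that the inversion rule requires $h$ to be proper closed convex, which holds here precisely because $\env_{\rsf^*}^{\peta}$ is an infimal convolution of convex functions as you note, and you correctly verify the nondegeneracy of $\rsf^{**}$ needed for the prox of $\rsf^{**}$ to be single-valued.
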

\begin{proof}
By the envelope theorem, we have $\nabla \env_{\rsf^*}^{\peta}(\wv_t^*) = \tfrac{\wv_t^* - \tP_{\rsf^*}^{\peta}(\wv_t^*)}{\peta}$. Furthermore, using the Moreau decomposition, we have
\begin{align}
    \rvw_t^* &= \tP_{\mu \rsf^*}^1(\rvw_t^*) + \tP_{\left(\mu \rsf^*\right)^*}^1(\rvw_t^*) \\
    &= \tP_{\rsf^*}^\mu(\rvw_t^*) + \tP_{ \ssf}^\mu(\rvw_t^*),
\end{align}
where $\ssf(\rvw) \coloneqq \rsf^{**}(\rvw/\peta)$. Combining the two results, we have
\begin{align}
    \nabla \env_{\rsf^*}^{\peta}(\wv_t^*) &= \frac{\wv_t^* - \tP_{\rsf^*}^{\peta}(\wv_t^*)}{\peta} \\
    &= \frac{\tP_{\ssf}^{\peta}(\wv_t^*)}{\peta} \\
    &= \tfrac{1}{\peta} \argmin_\rvw \tfrac{1}{2\peta} \|\rvw - \rvw_t^*\|_2^2 + \ssf(\rvw) \\
    &= \argmin_\rvw \tfrac{1}{2\peta} \|\peta \rvw - \rvw_t^*\|_2^2 + \ssf(\peta \rvw) \\
    &= \argmin_\rvw \tfrac{\peta}{2} \|\rvw - \tfrac{\rvw_t^*}{\peta}\|_2^2 + \rsf^{**}(\rvw) \\ 
    &= \tP_{\rsf^{**}}^{1/\peta}(\rvw_t^* / \mu).
\end{align}

\end{proof}
\subsection{Proofs for \texorpdfstring{\Cref{sec:PC}}{Section 5}} \label{app:proof_PC}
In fact, \Cref{thm:pc} is a special case of a more general result:
\begin{restatable}{theorem}{pcg}\label{thm:pcg}
Given $\wv_0^*$, $\eta_t \geq 0$ and $\tfrac{1}{\pi_t} = 1 + \sum_{\tau=1}^t \eta_\tau$, consider the iterates defined as 
\begin{align}
\label{eq:pc-g}
\wv_t = \tP_{\rsf}^{1/\peta_{t}} (\pi_{t-1}\wv_t^*/\peta_{t}), \qquad 
\wv_{t+1}^* = \wv_t^* - \eta_t \widetilde\nabla \ell(\wv_t).
\end{align}
Then, for any $\wv$, 
\begin{align}
\label{eq:pc-gb}
\sum_{\tau=s}^t \eta_\tau [\langle\wv_\tau\!-\!\wv, \widetilde\nabla\ell(\wv_\tau)\rangle\!+\! \rsf(\wv_{\tau}) \!-\! \rsf(\wv) ] &\leq \Delta_{s-1}(\wv) \!-\! \Delta_t(\wv) \!+\!
\sum_{\tau=s}^t \Delta_{\tau}(\wv_{\tau}) + \\
& \qquad + \sum_{\tau=s}^t \tfrac{1}{2} (\tfrac{\mu_{\tau+1}}{\pi_\tau} - \tfrac{\mu_{\tau}}{\pi_{\tau-1}})(\|\wv\|_2^2- \|\wv_\tau\|_2^2),
\end{align}
where 
$\Delta_{\tau}(\wv) 
\coloneqq \rsf_{\tau}(\wv) - \rsf_{\tau}(\wv_{\tau+1})  - \inner{\wv-\wv_{\tau+1}}{\wv_{\tau+1}^*}$
is the Bregman divergence induced by the (possibly nonconvex) function $\rsf_\tau(\wv) \coloneqq \tfrac{1}{\pi_{\tau}}\rsf(\wv) + \tfrac{\mu_{\tau+1}}{2\pi_\tau}\|\wv\|_2^2$.
\end{restatable}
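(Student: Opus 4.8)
The plan is to mirror the purely algebraic identity in \Cref{thm:gcg-gen}, while carrying along one extra family of quadratic correction terms that the time‑varying smoothing parameter $\mu_\tau$ introduces. Note at the outset that the quantizer step $\wv_{\tau+1} = \tP_{\rsf}^{1/\mu_{\tau+1}}(\pi_{\tau}\wv_{\tau+1}^*/\mu_{\tau+1})$ is used only once and indirectly: its first‑order optimality condition (read with the limiting subdifferential when $\rsf$ is nonconvex) gives $\pi_\tau \wv_{\tau+1}^* - \mu_{\tau+1}\wv_{\tau+1} \in \partial\rsf(\wv_{\tau+1})$, i.e.\ $\wv_{\tau+1}^* \in \partial\rsf_\tau(\wv_{\tau+1})$, which is exactly what makes $\Delta_\tau$ a genuine Bregman divergence of $\rsf_\tau$. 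The bound \eqref{eq:pc-gb} itself holds as an identity and does not require this.

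The steps, in order, are: (i) the gradient update in \eqref{eq:pc-g} gives $\eta_\tau\widetilde\nabla\ell(\wv_\tau) = \wv_\tau^* - \wv_{\tau+1}^*$, so the first half of each summand on the left of \eqref{eq:pc-gb} becomes $\inner{\wv_\tau - \wv}{\wv_\tau^* - \wv_{\tau+1}^*}$; (ii) from $\tfrac{1}{\pi_t} = 1 + \sum_{\tau=1}^t\eta_\tau$ we have $\eta_\tau = \tfrac{1}{\pi_\tau} - \tfrac{1}{\pi_{\tau-1}}$, and since $\rsf_\tau = \tfrac{1}{\pi_\tau}\rsf + \tfrac{\mu_{\tau+1}}{2\pi_\tau}\|\cdot\|_2^2$ we can write $\eta_\tau\rsf(\wv) = \rsf_\tau(\wv) - \rsf_{\tau-1}(\wv) - \tfrac12\big(\tfrac{\mu_{\tau+1}}{\pi_\tau} - \tfrac{\mu_\tau}{\pi_{\tau-1}}\big)\|\wv\|_2^2$, and likewise with $\wv$ replaced by $\wv_\tau$; subtracting, the quadratic parts combine to $\tfrac12\big(\tfrac{\mu_{\tau+1}}{\pi_\tau} - \tfrac{\mu_\tau}{\pi_{\tau-1}}\big)(\|\wv\|_2^2 - \|\wv_\tau\|_2^2)$, i.e.\ precisely the last sum on the right of \eqref{eq:pc-gb}; (iii) a direct expansion of the definition of $\Delta_\tau$ shows that the leftover summand equals $\Delta_\tau(\wv_\tau) - \Delta_\tau(\wv) + \Delta_{\tau-1}(\wv)$, exactly as in the proof of \Cref{thm:gcg-gen}; (iv) summing over $\tau = s,\ldots,t$ and telescoping the $\Delta_\tau(\wv)$, $\Delta_{\tau-1}(\wv)$ pair yields $\Delta_{s-1}(\wv) - \Delta_t(\wv) + \sum_{\tau=s}^t\Delta_\tau(\wv_\tau)$. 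Combining (i)--(iv) gives \eqref{eq:pc-gb}, in fact with equality, so the stated inequality is immediate.

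The main obstacle is bookkeeping rather than anything conceptual: one must verify the identity in step (iii) by fully expanding $\rsf_\tau(\wv_\tau) - \rsf_\tau(\wv_{\tau+1}) - \inner{\wv_\tau - \wv_{\tau+1}}{\wv_{\tau+1}^*}$ and its analogue at $\wv$, and confirm that the $\rsf_\tau$‑versus‑$\rsf_{\tau-1}$ split in step (ii) lines up index‑for‑index with that expansion, so that nothing leaks out of the range $\{s,\ldots,t\}$ apart from the boundary terms $\Delta_{s-1}(\wv)$ and $-\Delta_t(\wv)$. A sign slip or off‑by‑one between the $\mu_\tau/\pi_{\tau-1}$ and $\mu_{\tau+1}/\pi_\tau$ indices produces spurious surviving quadratics, so those are the lines to check carefully; the boundary conventions for $\pi_{s-1}$ and $\pi_{-1}$ are taken as in \Cref{thm:gcg-gen}. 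Finally, \Cref{thm:pc} is the special case $\mu_\tau = \pi_{\tau-1}$, for which the last sum in \eqref{eq:pc-gb} vanishes and the quantizer reduces to $\tP_{\rsf}^{1/\pi_{t-1}}(\wv_t^*)$.
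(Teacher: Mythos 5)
Your proof is correct and arrives at the same identity (indeed with equality, as you note). The skeleton matches the paper's: both proofs reduce to the telescoping recursion underlying \Cref{thm:gcg-gen}, using the subdifferential optimality condition $\pi_\tau\wv_{\tau+1}^* - \mu_{\tau+1}\wv_{\tau+1} \in \partial\rsf(\wv_{\tau+1})$ to interpret $\Delta_\tau$ as the Bregman divergence of $\rsf_\tau$. The organizational difference is in how the quadratic corrections from the time-varying $\mu_\tau$ are handled. The paper invokes \Cref{thm:gcg-gen} with $g=\rsf$ and then post-processes via $\delta_\tau(\wv) = \Delta_\tau(\wv) - \tfrac{\mu_{\tau+1}}{2\pi_\tau}(\|\wv\|_2^2-\|\wv_{\tau+1}\|_2^2)$; this produces boundary-style quadratic terms indexed by $(\tau,\tau+1)$, which still need an Abel-type resummation to match the stated form $\sum_\tau \tfrac12(\tfrac{\mu_{\tau+1}}{\pi_\tau}-\tfrac{\mu_\tau}{\pi_{\tau-1}})(\|\wv\|_2^2-\|\wv_\tau\|_2^2)$ — a step the paper leaves implicit. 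You instead peel off the quadratic correction at the level of the decomposition $\eta_\tau\rsf = \rsf_\tau - \rsf_{\tau-1} - \tfrac12(\tfrac{\mu_{\tau+1}}{\pi_\tau}-\tfrac{\mu_\tau}{\pi_{\tau-1}})\|\cdot\|_2^2$, \emph{before} telescoping, so the correction emerges directly in its final single-index form and the remaining terms are exactly the Bregman triple $\Delta_\tau(\wv_\tau) - \Delta_\tau(\wv) + \Delta_{\tau-1}(\wv)$ of \Cref{thm:gcg-gen}. That buys you a self-contained derivation without the unwritten rearrangement, at the modest cost of not reusing the lemma as a black box.
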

\begin{proof}
We simply apply \Cref{thm:gcg-gen} with $\zv_\tau^* \coloneqq -\widetilde\nabla\ell(\wv_\tau)$, $\ws_\tau^* = \wv_\tau^*$ and $g \coloneqq \rsf$, and note that 
\begin{align}
\delta_\tau(\wv) = \Delta_\tau(\wv) - \tfrac{\mu_{\tau+1}}{2\pi_\tau}(\|\wv\|_2^2 - \|\wv_{\tau+1}\|_2^2).
\end{align}
To see that $\Delta_\tau$ is the Bregman divergence of $\rsf_\tau$, we apply the (sub)differential optimality condition to 
\begin{align}
\wv_{\tau+1} \coloneqq \tP_{\rsf}^{1/\mu_{\tau+1}}(\pi_{\tau}\wv_{\tau+1}^*/\mu_{\tau+1}) = \argmin_{\wv} ~~ \tfrac{1}{2} \|\wv - \pi_\tau \wv_{\tau+1}^*/\mu_{\tau+1}\|_2^2 + \tfrac{1}{\mu_{\tau+1}}\rsf(\wv),
\end{align}
so that 
\begin{align}
\wv_{\tau+1}^* \in \tfrac{\mu_{\tau+1}}{\pi_\tau}\wv_{\tau+1} + \tfrac{1}{\pi_\tau} \nabla\rsf(\wv_{\tau+1}) = \nabla \rsf_\tau(\wv_{\tau+1})
\end{align}
and hence 
\begin{align}
\Delta_\tau(\wv) = \rsf_\tau(\wv) - \rsf_\tau(\wv_{\tau+1}) - \inner{\wv-\wv_{\tau+1}}{\nabla \rsf_\tau(\wv_{\tau+1})}.
\end{align}
Clearly, $\rsf_\tau$ is convex if $\rsf$ is convex, in which case $\Delta_\tau \geq 0$.
\end{proof}
\pc*
\begin{proof}
Simply set $\peta_t = \pi_{t-1}$ in \Cref{thm:pcg} above.
\end{proof}
\corpc*
\begin{proof}
    We first apply the expectation with respect to random sampling to~\eqref{eq:pcb} which reduces the left hand side to
    \begin{align}
        \EE \left[\sum_{\tau=s}^t \eta_\tau [\langle\wv_\tau\!-\!\wv, \widetilde\nabla\ell(\wv_\tau)\rangle\!+\! \rsf(\wv_{\tau}) \!-\! \rsf(\wv) ] \right] &= \EE \left[\sum_{\tau=s}^t \eta_\tau [\langle\wv_\tau\!-\!\wv, \nabla\ell(\wv_\tau)\rangle\!+\! \rsf(\wv_{\tau}) \!-\! \rsf(\wv) ] \right] \\
        &\geq \EE \left[\sum_{\tau=s}^t \eta_\tau [\ell(\wv_{\tau}) \!-\! \ell(\wv) \!+\!\rsf(\wv_{\tau}) \!-\! \rsf(\wv) ] \right] \\
        &= \EE \left[\sum_{\tau=s}^t \eta_\tau [f(\wv_{\tau}) \!-\! f(\wv) ] \right],
    \end{align}
    where we used the convexity of $\ell$. We then obtain~\eqref{eq:pcc} by using
    \begin{align}
        \!\!\min_{\tau=s, \ldots, t} ~ \EE \left[f(\wv_{\tau}) \!-\! f(\wv) \right] \leq \frac{1}{\sum_{\tau=s}^t\eta_\tau} \EE \left[\sum_{\tau=s}^t \eta_\tau [f(\wv_{\tau}) \!-\! f(\wv) ] \right].
    \end{align}
    The right-hand sides of~\eqref{eq:pcc2} and~\eqref{eq:pcc3} are obtained by setting $\mu_\tau = \pi_{\tau-1}$ and upper bounding the Bregman divergence:
    \begin{align}
        \Delta_\tau(\rvw_\tau) &= \rsf_\tau(\rvw_\tau) - \rsf_\tau(\rvw_{\tau+1}) - \inner{\rvw_\tau - \rvw_{\tau + 1}}{\rvw_{\tau+1}^*} \\
        &= \rsf^*_\tau(\rvw_{\tau+1}^*) - \rsf^*_\tau(\rvw_{\tau}^*) - \inner{\rvw_{\tau+1}^* - \rvw_{\tau}^*}{\rvw_{\tau}} \quad (\text{by duality of Bregman divergence}) \\
        &\leq \frac{1}{2} \|\rvw_\tau^* - \rvw_{\tau + 1}^* \|_2^2 \quad (\text{by 1-smoothness of} \, \rsf^*)\\
        &= \frac{\eta_t^2}{2} \|\widetilde \nabla \ell(\rvw_t) \|_2^2,
    \end{align}
    where we have used the well-known fact that the Fenchel conjugate of a $(1/L)$-strongly convex function is $L$-smooth (in our case $L=1$).
    Lastly, the left-hand side of~\eqref{eq:pcc3} is obtained by applying the convexity of $f$:
    \begin{align}
        \left[{\sum_{\tau=s}^t \eta_\tau}\right]f(\bar \rvw_t) \leq {\sum_{\tau=s}^t \eta_\tau f(\rvw_\tau)},~~\mbox{where}~~ \bar \rvw_t = \tfrac{\sum_{\tau=s}^t \eta_\tau \rvw_\tau}{\sum_{\tau=s}^t \eta_\tau}.
    \end{align}
\end{proof}

\subsubsection{Discussion of \texorpdfstring{\Cref{thm:pcg}}{Theorem A.3}}
\label{app:disc}
Recall the Bregman divergence from~\Cref{thm:pcg}:
\begin{align}
    \Delta_\tau(\rvw) = \rsf_\tau(\rvw) - \rsf_{\tau}(\rvw_{\tau+1}) - \inner{\rvw - \rvw_{\tau + 1}}{\rvw_{\tau+1}^*}, \quad \rsf_\tau(\rvw) = \tfrac{1}{\pi_\tau} \rsf(\rvw) + \tfrac{\mu_{\tau+1}}{2 \pi_\tau} \|\rvw\|_2^2.
\end{align}
When $\rsf$ is $\sigma_0$-strongly convex, $\rsf_\tau$ is $\tfrac{\sigma_0+\mu_{\tau+1}}{\pi_\tau}$-strongly convex, and hence 
\begin{align} \label{eq:bounding_bregman}
\Delta_\tau(\wv_\tau) \leq \tfrac{\pi_\tau}{2(\sigma_0+\mu_{\tau+1})} \|\wv_{\tau+1}^* - \wv_\tau^*\|_2^2 = \tfrac{\pi_\tau\eta_\tau^2}{2(\sigma_0+\mu_{\tau+1})} \|\widetilde\nabla\ell(\wv_\tau)\|_2^2,
\end{align}
where we used the duality of the Bregman divergence and the smoothness of $\rsf_\tau^*$.
Dividing both sides of \eqref{eq:pc-gb} by $\sum_{\tau=s}^t \eta_\tau$ we obtain the upper bound:
\begin{align}
\mathrm{UB} \coloneqq \frac{\tfrac{\pi_{s-1}}{\sigma_0+\mu_{s}}\|\wv^*-\wv_s^*\|_2^2+\sum_{\tau=s}^t \big[ \tfrac{\pi_\tau\eta_\tau^2}{\sigma_0+\mu_{\tau+1}} \|\widetilde\nabla\ell(\wv_\tau)\|_2^2 + (\tfrac{\mu_{\tau+1}}{\pi_\tau} - \tfrac{\mu_{\tau}}{\pi_{\tau-1}})(\|\wv\|_2^2- \|\wv_\tau\|_2^2) \big] }{2\sum_{\tau=s}^t \eta_\tau},
\end{align}
where $\wv^* \coloneqq \nabla \rsf_{s-1}(\wv)$ and we have dropped the non-positive term $-\Delta_t(\rvw)$. 
Suppose\footnote{This assumption can be easily satisfied. In fact, we can just set $\mu_{t+1} = \pi_t$ (as in the main paper), which would simplify the discussion quite a bit.} $\tfrac{\mu_{t+1}}{\pi_t}$ is non-decreasing w.r.t. $t$, we can thus drop some non-positive terms to further simplify: 
\begin{align}
\mathrm{UB} \leq 
\frac{\tfrac{\pi_{s-1}}{\sigma_0+\mu_{s}}\|\wv^*-\wv_s^*\|_2^2 + \tfrac{\mu_{t+1}}{\pi_t} \|\wv\|_2^2 + \sum_{\tau=s}^t \big[ \tfrac{\pi_\tau\eta_\tau^2}{\sigma_0+\mu_{\tau+1}} \|\widetilde\nabla\ell(\wv_\tau)\|_2^2 \big] }{2\sum_{\tau=s}^t \eta_\tau},
\end{align}
To minimize the upper bound, we consider two cases:
\begin{itemize}
\item $\sigma_0 = 0$, in which case let us choose $\mu_{\tau+1} = c \sqrt{\eta_\tau \pi_\tau} = c \sqrt{\lambda_\tau}$ (recall that we reparameterized $\eta_\tau = \lambda_\tau/\pi_\tau$ from GCG), where $c$ is an absolute constant. Then, the upper bound reduces to 
\begin{align}
\frac{\tfrac{\pi_{s-1}}{\mu_{s}}\|\wv^*-\wv_s^*\|_2^2 + \tfrac{c \eta_t}{ \sqrt{\lambda_t}} \|\wv\|_2^2 + \sum_{\tau=s}^t \eta_\tau \sqrt{\lambda_\tau}  \|\widetilde\nabla\ell(\wv_\tau)\|_2^2/c }{2\sum_{\tau=s}^t \eta_\tau},
\end{align}
where recall that $\pi_t = \tfrac{1}{1+\sum_{\tau=1}^t \eta_\tau}$. When the gradient $\nabla\ell$ is bounded, we may choose 
\begin{align}
\label{eq:lambda}
\lambda_t = \tfrac{\eta_t}{ 1 + \sum_{\tau=1}^t \eta_\tau} = O(1/t)
\mbox{ and } \tfrac{1}{\sum_{\tau=1}^t \eta_\tau} = O(1/ \sqrt{t})
\end{align}
so that the upper bound diminishes\footnote{Note that by choosing $s\propto t$, the averaged sequence $\tfrac{\sum_{\tau=s}^t \eta_\tau \sqrt{\lambda_\tau}}{\sum_{\tau=s}^t \eta_\tau} \leq \max\limits_{\tau=s, \ldots, t} \sqrt{\lambda_\tau}$ diminishes similarly in order as $\sqrt{\lambda_t}$.} at the rate of $O(1/\sqrt{t})$. This result makes intuitive sense, since we know the optimal step size $\lambda_t$ in GCG is $\Theta(1/t)$. Also, it reveals that the smoothing parameter $\mu_t = O(1/\sqrt{t})$, matching the rate of the upper bound (i.e., progress on the original problem). Interestingly, the choice in \eqref{eq:lambda} can be realized in multiple ways. In fact, $\eta_t = \eta_0 t^{-p}$ for any $p\in [0, \tfrac12]$ suffices, In particular, choosing $p = 0$ leads to the constant step size $\eta_t \equiv \eta_0$. However, note that these choices are in some sense equivalent, since they all lead to $\lambda_t = O(1/t)$ and $\mu_t = O(1/\sqrt{t})$ hence the underlying GCG progresses similarly.

\item $\sigma_0 > 0$, in which case we can further relax the upper bound to: 
\begin{align}
\mathrm{UB} \leq \frac{\tfrac{\pi_{s-1}}{\sigma_0+\mu_{s-1}}\|\wv^*-\wv_s^*\|_2^2 + \tfrac{\mu_{t+1}}{\pi_t}\|\wv\|_2^2 + \sum_{\tau=s}^t \big[ \tfrac{\pi_\tau\eta_\tau^2}{\sigma_0} \|\widetilde\nabla\ell(\wv_\tau)\|_2^2\big] }{2\sum_{\tau=s}^t \eta_\tau},
\end{align}
and now we may choose $\mu_{\tau+1} = c \eta_\tau\pi_\tau = c \lambda_\tau$, which approaches 0 significantly faster than before. With this choice, the upper bound reduces to 
\begin{align}
\frac{\tfrac{\pi_{s-1}}{\sigma_0+\mu_{s-1}}\|\wv^*-\wv_s^*\|_2^2 + c\eta_t\|\wv\|_2^2 + \sum_{\tau=s}^t  \eta_\tau\lambda_\tau \big[ \|\widetilde\nabla\ell(\wv_\tau)\|_2^2/\sigma_0\big] }{2\sum_{\tau=s}^t \eta_\tau}.
\end{align}
Again, we may set $\lambda_t$ as in \eqref{eq:lambda}, and we can choose constant $\eta_t \equiv \eta_0$. The major difference is that we may now decrease the smoothing parameter $\mu_t$ much more aggressively.
\end{itemize}
\section{Implementation and Experiment Details}
\subsection{CIFAR-10} \label{app:cifar10}
\subsubsection{Model and Quantization Details}
Similar to~\citet{bai2018}, we use ResNets for which we quantize all weights. BatchNormalization layers and activations are kept at full precision. The basic ResNet implementation is taken from~\url{https://github.com/akamaster/pytorch_resnet_cifar10/blob/master/resnet.py}.

\subsubsection{Data Augmentation}
We follow the data augmentation strategy from~\citet{bai2018}: padding by four pixels on each side, randomly cropping to 32-by-32 pixels, horizontally flipping with probability one half. Finally, the images are normalized by subtracting $(0.4914, 0.4822, 0.4465)$ and subsequently dividing by $(0.247, 0.243, 0.261)$.
\subsubsection{Pretrained Full Precision Model Setup}
We pretrain two single full precision models for 200 epochs using the standard optimization setup: SGD with 0.9 momentum and $\num{1e-4}$ weight decay. The initial learning rate is $0.1$ which is multiplied by 0.1 at epoch 100 and 150. We use batch size 128. 
\subsubsection{Fine-Tuning Setup}
All methods are initialized with the last checkpoint of the full precision models.

For BinaryConnect, we train with the recommended strategy from~\citet{courbariaux2015}: Adam with learning rate 0.01 and multiplication of the learning rate by $0.1$ at epoch 81 and 122. For ProxQuant we use Adam with fixed learning rate of 0.01 as in~\citet{bai2018}. We did not perform hyperparameter search over the optimization setup for ProxConnect but rather just used the optimization setup from BinaryConnect as the methods are very similar. We use batch size 128.

Similar to~\citet{bai2018}, we perform a hard quantization at epoch 200: all weights are projected to their closet quantization points. As in~\citet{bai2018}, we then train BatchNormalization layers for another 100 epochs.
\subsubsection{End-To-End Setup}
For simplicity and to avoid an expensive hyperparameter search, we followed the above full precision optimization setup for all methods. Similar to the fine-tuning setup, we perform a hard quantization at epoch 200 and keep training BatchNormalization layers for another 100 epochs. We use batch size 128.
\subsubsection{Compute and Resources} \label{app:cifar10_gpu}
We run all CIFAR-10 experiments on our internal cluster with GeForce GTX 1080 Ti. We use one GPU per experiment. The total amount of GPU hours is summarized in~\Cref{tab:compute_cifar10}.
\begin{table}[ht]
    \centering
    \caption{Compute for CIFAR-10 experiments measured in hours per single GeForce GTX 1080 Ti GPU.}
    \label{tab:compute_cifar10}
    \begin{tabular}{c c c c}
        \toprule
        Architecture & Run time & \# of total experiments & Total run time \\
        \midrule
        ResNet20 & 1.5 & 181 & 271.5 \\
        ResNet56 & 3 & 181 & 543 \\
        \bottomrule
    \end{tabular}
\end{table}
\subsubsection{Additional Results} \label{cifar10:add_results}
As mentioned in~\Cref{sec:cifar10}, we perform a small grid search over $\rho_0$. After an initial exploration stage, we found good regions of $\rho_0$ for ProxQuant and ProxConnect. Since ProxQuant and reverseProxConnect are quite similar, in particular in the small $\rho$ regime, we simply used the $\rho_0$ from ProxQuant for reverseProxConnect. See below the results for all $\rho_0$ settings. \\
\\
ProxConnect is reasonably stable with respect to the choice of $\rho_0$ for both fine-tuning and end-to-end training. ProxQuant and reverseProxConnect, on the other hand, are very sensitive to the choice of $\rho_0$ for end-to-end training. ProxConnect reduces to BinaryConnect for large $\rho$, and therefore it is in line with our experiments that ProxConnect should be stable with respect to $\rho_0$, particularly choosing $\rho_0$ too large should not be an issue.
\begin{table}[ht]
    \centering
    \caption{Additional results for fine-tuning ProxQuant.}
    \label{tab:add_fine_prox_quant}    
    \begin{tabular}{c c c c c c}
    \toprule
    Model &Quantization & $\rho_0=\num{5e-7}$ &$\rho_0=\num{1e-6}$ &$\rho_0=\num{2e-6}$ \\
    \midrule
    \multirow{3}{*}{ResNet20} & Binary &89.68 (0.10) &\textbf{89.94} (0.10) &89.34 (0.33) \\
    & Ternary &91.03 (0.07) &\textbf{91.46} (0.05) &91.11 (0.10)\\
    & Quaternary &91.10 (0.06) &91.13 (0.18) &\textbf{91.43} (0.17)\\
    \midrule
    \multirow{3}{*}{ResNet56} & Binary &92.25 (0.08) &\textbf{92.33} (0.06) &92.16 (0.14)\\
    & Ternary &92.52 (0.90) &\textbf{93.07} (0.02) &92.71 (0.16)\\
    & Quaternary &92.49 (0.03) &\textbf{92.82} (0.13) &92.80 (0.15)\\
    \bottomrule
\end{tabular}
\end{table}
\begin{table}[ht]
    \centering
    \caption{Additional results for fine-tuning reverseProxConnect.}
    \label{tab:add_fine_rbc}    
    \begin{tabular}{c c c c c c}
    \toprule
    Model &Quantization & $\rho_0=\num{5e-7}$ &$\rho_0=\num{1e-6}$ &$\rho_0=\num{2e-6}$ \\
    \midrule
    \multirow{3}{*}{ResNet20} & Binary &89.88 (0.23) &\textbf{89.98} (0.17) &89.91 (0.24)  \\
    & Ternary &91.30 (0.05) &91.36 (0.03) &\textbf{91.47} (0.15) \\
    & Quaternary &91.05 (0.12) &\textbf{91.43} (0.05) &91.42 (0.07) \\
    \midrule
    \multirow{3}{*}{ResNet56} & Binary &92.18 (0.06) &92.35 (0.02) &\textbf{92.47} (0.29)  \\
    & Ternary &\textbf{92.84} (0.09) &92.85 (0.12) &92.84 (0.17) \\
    & Quaternary &92.66 (0.20) &92.88 (0.17) &\textbf{92.91} (0.22) \\
    \bottomrule
\end{tabular}
\end{table}
\begin{table}[ht]
    \centering
    \caption{Additional results for fine-tuning ProxConnect.}
    \label{tab:add_fine_prox_connect}    
    \begin{tabular}{c c c c c c}
    \toprule
    Model &Quantization & $\rho_0=\num{5e-3}$ &$\rho_0=\num{1e-2}$ &$\rho_0=\num{2e-2}$ \\
    \midrule
    \multirow{3}{*}{ResNet20} & Binary &89.63 (0.26) &90.29 (0.07) &\textbf{90.31} (0.21)\\
    & Ternary &91.31 (0.07) &\textbf{91.37} (0.18) &91.13 (0.27) \\
    & Quaternary &91.62 (0.21) &91.55 (0.10) &\textbf{91.81} (0.14) \\
    \midrule
    \multirow{3}{*}{ResNet56} & Binary &92.41 (0.13) &92.62 (0.09) &\textbf{92.65} (0.16) \\
    & Ternary &93.17 (0.04) &\textbf{93.25} (0.12) &93.22 (0.06) \\
    & Quaternary &93.41 (0.11) &\textbf{93.42} (0.12) &93.28 (0.06) \\
    \bottomrule
\end{tabular}
\end{table}
\begin{table}[ht]
    \centering
    \caption{Additional results end-to-end training ProxQuant.}
    \label{tab:add_end_prox_quant}    
    \begin{tabular}{c c c c c c}
    \toprule
    Model &  Quantization & $\rho_0=\num{1e-7}$ &$\rho_0=\num{1e-6}$ &$\rho_0=\num{1e-5}$ \\
    \midrule
    \multirow{3}{*}{ResNet20} & Binary &\textbf{81.59} (0.75) &81.49 (0.41) &71.90 (0.75)\\
    & Ternary &28.22 (1.70) &41.08 (2.95) &\textbf{47.98} (1.06)\\
    & Quaternary &84.58 (0.15) &\textbf{85.29} (0.07) &75.08 (0.16)\\
    \midrule
    \multirow{3}{*}{ResNet56} & Binary &\textbf{86.13} (1.71) &80.25 (0.51) &68.31 (2.21)\\
    & Ternary &21.93 (2.43) &41.11 (2.12) &\textbf{50.54} (3.01)\\
    & Quaternary &\textbf{87.81} (1.30) &83.57 (1.70) &72.58 (2.23)\\
    \bottomrule
\end{tabular}
\end{table}
\begin{table}[ht]
    \centering
    \caption{Additional results end-to-end training reverseProxConnect.}
    \label{tab:add_end_rbc}    
    \begin{tabular}{c c c c c c}
    \toprule
    Model &Quantization & $\rho_0=\num{1e-7}$ &$\rho_0=\num{1e-6}$ &$\rho_0=\num{1e-5}$ \\
    \midrule
    \multirow{3}{*}{ResNet20} & Binary &\textbf{81.82} (0.32) &80.84 (0.40) &72.10 (1.01) \\
    & Ternary &26.49 (2.82) &40.78 (0.39) &\textbf{47.17} (1.94) \\
    & Quaternary &\textbf{85.05} (0.27) &84.82 (0.32) &75.61 (0.54) \\
    \midrule
    \multirow{3}{*}{ResNet56}  & Binary &\textbf{86.25} (1.50) & 81.58 (0.92) &67.53 (2.74) \\
    & Ternary &21.51 (1.12) &\textbf{42.95} (1.57) &36.34 (18.68) \\
    & Quaternary &\textbf{87.30} (1.02) &84.72 (1.31) &73.36 (1.36) \\
    \bottomrule
\end{tabular}
\end{table}
\begin{table}[ht]
    \centering
    \caption{Additional results end-to-end training ProxConnect.}
    \label{tab:add_end_prox_connect}    
    \begin{tabular}{c c c c c c}
    \toprule
    Model &  Quantization & $\rho_0=\num{5e-3}$ &$\rho_0=\num{1e-2}$ &$\rho_0=\num{2e-2}$ \\
    \midrule
    \multirow{3}{*}{ResNet20} & Binary &89.72 (0.13) &\textbf{89.92} (0.26) &89.65 (0.15)\\
    & Ternary &\textbf{84.09} (0.16) &83.54 (0.36) &82.84 (0.34)\\
    & Quaternary &\textbf{90.17} (0.14) &90.12 (0.33) &89.91 (0.09)\\
    \midrule
    \multirow{3}{*}{ResNet56} & Binary &\textbf{91.26} (0.59) &90.45 (0.83) &89.29 (0.45)\\
    & Ternary &\textbf{84.36} (0.75) &83.46 (1.14) &82.54 (1.38)\\
    & Quaternary &91.00 (0.50) &90.76 (0.54) &\textbf{91.70} (0.14)\\
    \bottomrule
\end{tabular}
\end{table}
\clearpage
\subsection{ImageNet} \label{app:imagenet}
\subsection{Model and Quantization Details}
We use ResNet18 as our model, for which we quantize all weights except for the first convolutional layer and the last fully-connected layer. Other components such as BatchNormalization layers, activations and biases are kept at full-precision. The ResNet implementation is borrowed from PyTorch's torchvision package: \url{https://github.com/pytorch/vision/blob/882e11db8138236ce375ea0dc8a53fd91f715a90/torchvision/models/resnet.py}.

\subsection{Data Augmentation}
Common data augmentation strategy is followed from \url{https://github.com/pytorch/examples/blob/c002856901eaf9be112feb9b14a9d5c3e779da74/imagenet/main.py#L204-L231}. At training time, the images are randomly resized and cropped to 224-by-224 pixels, followed by a random horizontal flip. At inference time, images are first resized to 256-by-256 pixels, then center cropped to 224-by-224 pixels. Finally, the images are normalized by subtracting $(0.485, 0.456, 0.406)$ and subsequently dividing by $(0.229, 0.224, 0.224)$.
\subsection{Pretrained Full Precision Model Setup}
We use the ResNet18 checkpoint provided by torchvision as our pretrained model: \url{https://download.pytorch.org/models/resnet18-f37072fd.pth}.

\subsection{Fine-Tuning Setup}
All fine-tuned models are initialized from the pretrained model described in the previous section.

As advised by \citet{alizadeh2018empiricalbnn}, we use Adam as the optimizer to fine-tune the models. We use Adam with default parameters, initial learning rate $\num{1e-4}$ and batch size 256. The models are fine-tuned for 50 epochs. The learning rate is divided by 10 at epoch 15 and 30. We perform hard quantization at epoch 45 and train the remaining full-precision layers for the last 5 epochs, mostly to let BatchNormalization layers stabilize.

\subsection{End-To-End Setup}
For the end-to-end setup, the models are trained from scratch mimicking the training setup of the full-precision pretrained model. The quantized ResNet18 is trained for 90 epochs using SGD with a starting learning rate of $0.1$, momentum of $0.9$ and weight decay of $\num{1e-4}$. The learning rate is multiplied by $0.1$ at epoch 30 and 60. We use batch size 256.  We perform hard quantization at epoch 80 and train the remaining full-precision layers for the last 10 epochs.

\subsection{Compute and Resources} \label{app:imagenet_gpu}
We run all experiments on our internal cluster with Tesla V100. We use one GPU per experiment. The total amount of GPU hours is summarized in~\Cref{tab:compute_imagenet}.
\begin{table}[ht]
    \centering
    \caption{Compute for ImageNet experiments measured in hours per single Tesla V100 GPU.}
    \label{tab:compute_imagenet}
    \begin{tabular}{c c c c}
        \toprule
        Approach & Run time & \# of total experiments & Total run time \\
        \midrule
        End-To-End & 27 & 9 & 243 \\
        Fine-Tuning & 12 & 9 & 108 \\
        \bottomrule
    \end{tabular}
\end{table}

\end{document}